\newcommand{\R}{\mathbb{R}}
\newcommand{\X}{\mathcal{X}}
\newcommand{\T}{\mathcal{T}}
\newcommand{\mi}{\mathrm{i}}
\algrenewcommand\algorithmicrequire{\textbf{Input:}}
\algrenewcommand\algorithmicensure{\textbf{Output:}}
\theoremstyle{remark}
\newtheorem{definition}{Definition}
\newtheorem{theorem}{Theorem}
\newtheorem{lemma}{Lemma}
\newtheorem{corollary}{Corollary}
\begin{document}

\begin{frontmatter}

\title{Assessing the Distributional Fidelity of Synthetic Chest X-rays using the Embedded Characteristic Score}

\runtitle{Embedded Characteristic Score to Evaluate Synthetic Chest X-rays}
 
\begin{aug}

\author[A]{\fnms{Edric}~\snm{Tam} \ead[label=e1]{edrictam@stanford.edu}}
\and
\author[A, B]{\fnms{Barbara}~\snm{E. Engelhardt}\ead[label=e2]{bengelhardt@stanford.edu}}

\address[A]{Department of Biomedical Data Science, Stanford University\printead[presep={ ,\ }]{e1}}

\address[B]{Gladstone Institutes\printead[presep={,\ }]{e2}}
\end{aug}
\begin{abstract}
Chest X-ray (CXR) images are among the most commonly used diagnostic imaging modalities in clinical practice. Stringent privacy constraints often limit the public dissemination of patient CXR images, contributing to the increasing use of synthetic images produced by deep generative models for data sharing and training machine learning models. Given the high-stakes downstream applications of CXR images, it is crucial to evaluate how faithfully synthetic images reflect the underlying target distribution. We propose the embedded characteristic score (ECS), a flexible evaluation procedure that compares synthetic and patient CXR samples through characteristic function transforms of feature embeddings. The choice of embedding can be tailored to the clinical or scientific context of interest. By leveraging the behavior of characteristic functions near the origin, ECS is sensitive to differences in higher moments and distribution tails, aspects that are often overlooked by commonly used evaluation metrics such as the Fr\'echet Inception Distance (FID). We establish theoretical properties of ECS and describe a calibration strategy based on a simple resampling procedure. We compare the empirical performance of ECS against FID via simulations and standard benchmark imaging datasets. Assessing synthetic CXR images with ECS uncovers clinically relevant distributional discrepancies relative to patient CXR images. These results highlight the importance of reliable evaluation of synthetic data that inform high-stakes decisions. 

\end{abstract}

\begin{keyword}
\kwd{Evaluation}
\kwd{Generative Models}
\kwd{Chest X-ray}
\kwd{FID}
\kwd{Medical Images}
\end{keyword}

\end{frontmatter}

\section{Introduction}
\label{sec:intro}

Generative modeling approaches are increasingly used in medical imaging to produce synthetic images when access to patient data is limited by privacy, regulatory or logistical constraints \citep{arora2025urgent}. Advances in deep generative modeling architectures, including the variational autoencoder (VAEs) \citep{kingma2014auto}, generative adversarial networks (GANs) \citep{goodfellow2014generative}, and diffusion models \citep{ho2020denoising, song2021score} have enabled progressively more realistic images to be synthesized. This contributes to the growing use of synthetic images to train machine learning models \citep{man2022review, fan2024scaling, mumuni2024survey}, particularly in medical imaging applications \citep{koetzier2024generating, coyner2022synthetic}. 

Synthetic medical images that do not faithfully reflect the underlying target image distribution can propagate hidden biases \citep{norori2021addressing} or fail to capture less common pathological features, potentially affecting downstream model performance or clinical interpretation \citep{pencina2020prediction}. Given the central role that medical images play in informing clinical decision making \citep{giuffre2023harnessing}, rigorous evaluation of the fidelity of synthetic medical images is essential. 

In this work, we focus on chest X-ray (CXR) imaging, which plays a central role in the diagnosis of a wide range of cardiopulmonary conditions, and is used in both acute and routine clinical settings \citep{raoof2012interpretation}. With an estimated 129 million chest X-rays performed in the United States alone in 2006 \citep{mettler2009radiologic}, this imaging modality has attracted substantial interest in deep learning and in synthetic image generation \citep{ccalli2021deep, truhn2025synthetic}, making it a natural and important testbed for evaluation. 

Currently, the most commonly used quantitative metric for comparing patient versus synthetic medical images is the Fr\'echet Inception Distance (FID) and related variants \citep{woodland2024feature, konz2026frechet, heusel2017gans}. The FID first extracts embeddings of patient and synthetic medical images using a pretrained Inception v3 deep neural network \citep{szegedy2016rethinking}. It then compares the distributions of patient image embeddings and synthetic image embeddings using a Fr\'echet distance under a multivariate Gaussian assumption. Another important approach for evaluating synthetic medical images is via human experts, where radiologists are asked to distinguish between synthetic and real images in so-called visual Turing assessments \citep{pan20232d}. We give a brief overview of evaluation metrics for synthetic images in Section \ref{related}. 

Recent evidence suggests that generative AI models have reached a level of maturity where synthetic medical images are increasingly perceptually indistinguishable from authentic patient images, even to expert radiologists on visual Turing tests \citep{koetzier2024generating, pan20232d, jang2023image}. However, visual realism does not guarantee distributional fidelity (Figure~\ref{fig:cxr_panel}). Synthetic datasets might fail to encompass the full spectrum of biological variability found in patient-derived data. Because critical diagnostic information, such as pathological markers or rare clinical phenotypes, often resides in the extremes of the data distribution, evaluation methods sensitive to these subtle discrepancies are essential for improving clinical safety.
 
\begin{figure}[t]
\centering
\begin{subfigure}[t]{0.48\textwidth}
    \centering
    \includegraphics[width=\textwidth]{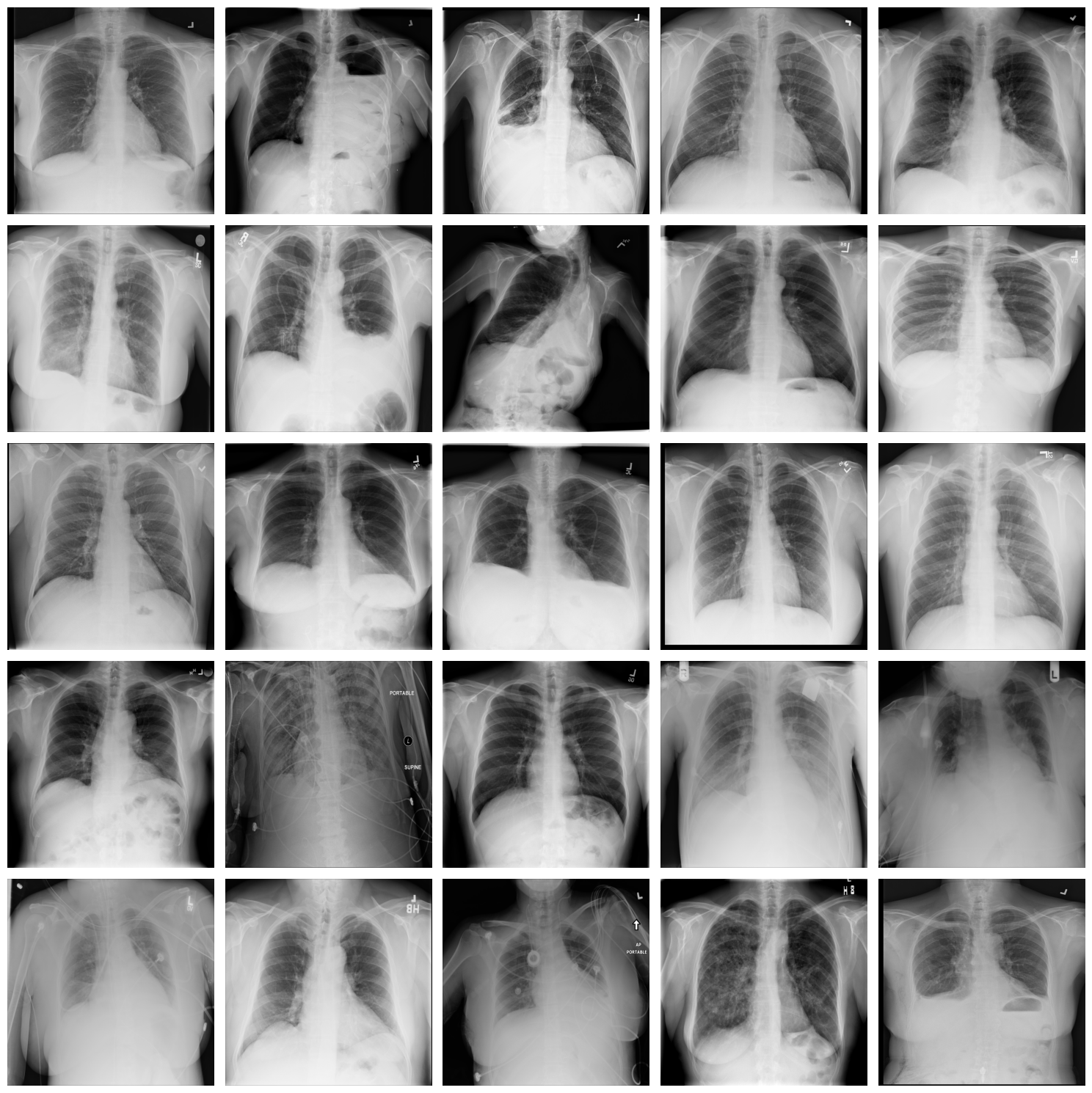}
    \caption{Authentic chest X-ray images.}
    \label{fig:cxr_real}
\end{subfigure}
\hfill
\begin{subfigure}[t]{0.48\textwidth}
    \centering
    \includegraphics[width=\textwidth]{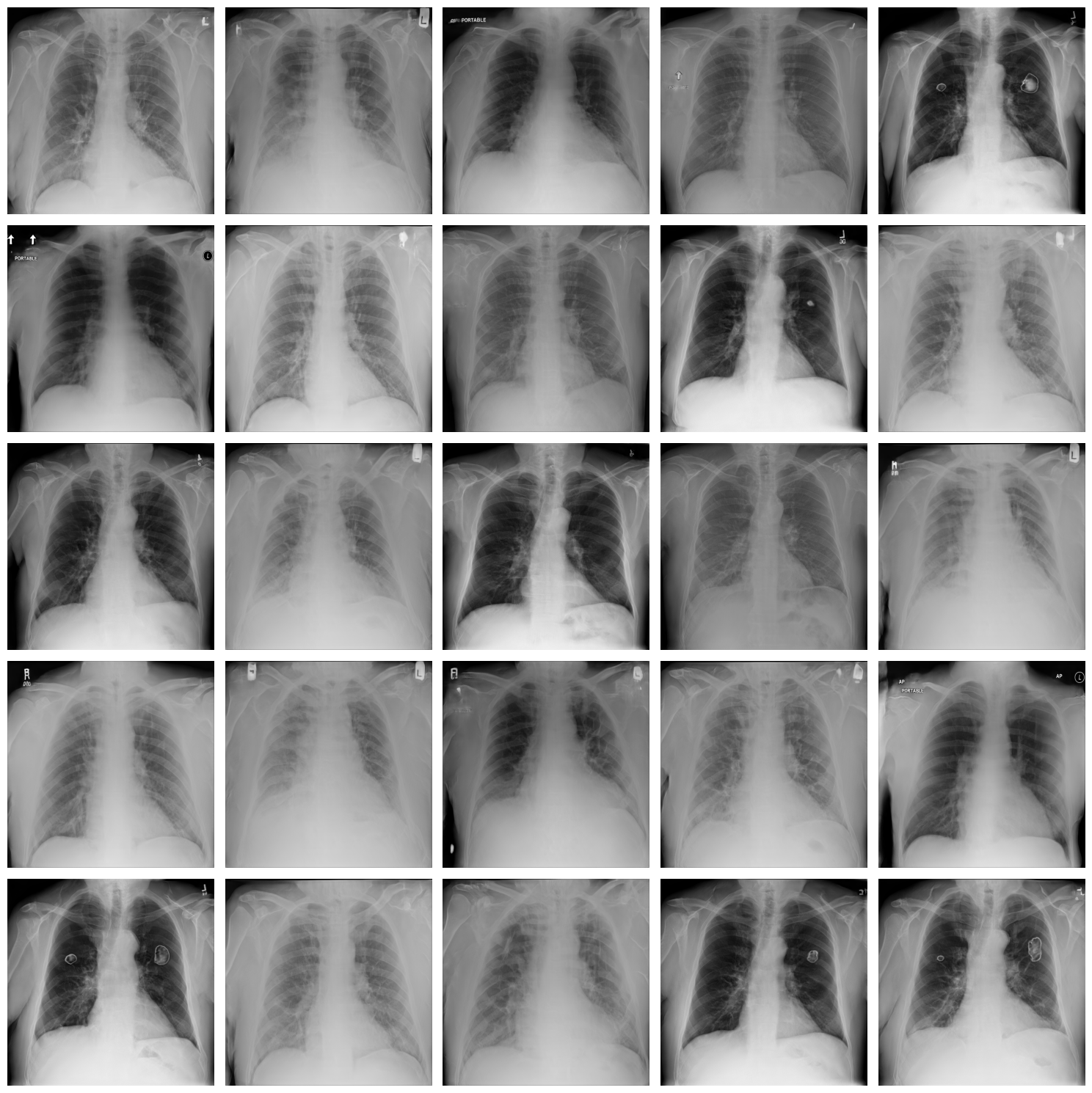}
    \caption{Synthetic chest X-ray images.}
    \label{fig:cxr_synth}
\end{subfigure}
\caption{\textbf{Comparison of authentic and synthetic chest X-ray images.} 
A panel of representative authentic images from the ChestX-ray14 dataset (left) and synthetic images (right) generated by a GAN pretrained on the same ChestX-ray14 dataset.}
\label{fig:cxr_panel}
\end{figure}

Here, we argue that current evaluation metrics are insufficient to account for the distributional mismatches between synthetic and patient CXR images. While approaches based on human evaluation may be appropriate for evaluating the visual realism of individual synthetic images \citep{otani2023toward}, human evaluators are generally not well suited to judging distributional agreement, given well-documented cognitive limitations in assessing randomness \citep{tversky1974judgment, williams2008people}. On the other hand, current quantitative metrics often fail to properly account for differences between the synthetic and target distributions. For example, the FID relies on normality approximations that do not hold in practice and obscure important discrepancies in the higher moments and tails. 

We propose the embedded characteristic score (ECS) as a complementary approach that gauges the distributional match between patient and synthetic CXR images. Rather than assessing visual realism, ECS aims to capture discrepancies in higher moments and tails that standard evaluation metrics based on normality assumptions may overlook. The ECS consists of two main components: feature embeddings of the raw images, and characteristic function transforms of these features. ECS allows the choice of feature embedding to be specified by the practitioner, enabling evaluation to be aligned with the scientific or clinical context of interest. Once the feature embedding is computed, the characteristic function of each feature is then estimated at designated points near the origin and used to compare the synthetic and patient images. We focus on characteristic functions near the origin because they are known to capture tail and moment information while remaining stable to estimate, even under heavy-tailed data. We also provide a simple resampling-based calibration procedure that helps practitioners interpret the magnitude of ECS in a principled way.

Before turning to our motivating chest X-ray application, we assess the empirical behavior of ECS in two settings. First, we conduct a simulation study to gauge the ability of ECS to faithfully capture tail and moment differences between known probability distributions. Second, we compare ECS against FID on benchmark image datasets to facilitate a direct comparison in a standardized setting. We then apply ECS to evaluate synthetic chest X-ray images from an advanced generative model \citep{osuala2023medigan} pretrained on a large-scale chest X-ray dataset \citep{wang2017chestx}. We use an embedding map that is tailored to include interpretable and clinically relevant features for CXR images. Examining ECS along these features reveals distributional discrepancies between patient and synthetic CXR images.

Our goal is not to replace existing evaluation approaches for CXR data such as visual Turing assessments. Rather, ECS is intended as a complementary diagnostic tool that provides additional insights into distributional mismatches that may not be captured by commonly used metrics. Used alongside existing methods, ECS supports a more nuanced assessment of the fidelity of synthetic CXR images.

While it is tempting to frame the evaluation of synthetic CXR images within a hypothesis testing paradigm, binary accept–reject decisions and formal hypothesis tests are not the primary focus of this work. Because synthetic data are inherently approximations to observed patient data, the central question is not to test whether differences exist, but rather whether such differences are practically relevant on clinically important features that influence downstream interpretation and decision making.

\section{Related Work}
\label{related}
 
A variety of metrics have been proposed for assessing the quality of synthetic images in the machine learning and computer vision literature. We can broadly divide such metrics into two categories: \textit{perceptual realism}, which evaluates the individual properties of generated images, and \textit{distributional comparisons}, which evaluate the collective properties of generated images, such as variability and overall distributional agreement. 

Human evaluation of generated images, often treated as the gold standard for image generative model evaluation \citep{zhou2019hype}, falls into the former category. The core idea is to use human evaluators to distinguish synthetic images from real ones \citep{denton2015deep, rossler2019faceforensics++, pan20232d}. In the medical imaging literature, this often takes the form of visual Turing assessments, where radiologists are shown synthetic and patient images and asked to identify which ones are generated \citep{pan20232d}. Recently, a more refined metric called Human eye perceptual evaluation (HYPE) \citep{zhou2019hype} that incorporates psychophysical principles for human evaluation has been proposed. The HYPE approach is to display images to human evaluators with adaptive time constraints, so that the perceptual threshold for distinguishing real images from synthetic ones can be estimated. 

While human evaluation can be effective at gauging the realism of individual images, they are less effective at assessing the collective and distributional properties of synthetic images. This is due to well known cognitive limitations of humans when handling randomness \citep{tversky1974judgment,williams2008people}. The most common approach for evaluating synthetic images in the \textit{distributional comparisons} category is the Fr\'echet Inception Distance \citep{heusel2017gans}. Here, an embedding vector is computed for each image using the Inception v3 neural network. The embedding vectors that correspond to real images are then compared to those that correspond to synthetic images under normality assumptions using a Fr\'echet distance. Historically, the Inception Score (IS) \citep{salimans2016improved} has also been used to evaluate synthetic images.  
Numerous other evaluation approaches have been proposed based on different considerations, such as density estimation \citep{goodfellow2014generative}, kernel techniques \citep{binkowski2018demystifying}, and information theory approaches \citep{jalali2024information}. For more systematic reviews and studies, see \citep{xu2018empirical, theis2015note, betzalel2024evaluation, stein2024exposing}. 

In medical imaging, visual Turing assessments are considered the gold standard for evaluating the perceptual realism of synthetic images \citep{koetzier2024generating, pan20232d}. However, it has been noted that visual assessment alone is insufficient for evaluating synthetic medical images \citep{yamamoto2023visual}. For assessing collective or distributional properties, the Fr\'echet Inception Distance (FID) has emerged as the de facto standard \citep{koetzier2024generating}. 
A variant based on the Fr\'echet distance have been proposed for evaluating synthetic echocardiograms \citep{abdusalomov2023evaluating}. A recent review \citep{zamzmi2025scorecard} highlights several limitations of existing medical image evaluation standards.

Existing evaluation metrics for synthetic medical images largely focus on perceptual realism (visual Turing assessments) or low-order distributional properties (Fr\'echet distances), and are often insensitive to differences in higher moments and tail behavior. In this work, our goal is to design a complementary metric that can provide insight into the distributional fidelity of synthetic CXR images.
  
\section{Embedded Characteristic Score (ECS)}
\label{sec:meth}
In this section, we give a precise definition of the embedded characteristic score, as well as related theoretical properties that motivate this definition. 

\subsection{Background and Setup}
Consider the chest X-ray images as coming from some sample space $\X$ that is endowed with some $\sigma$-algebra $\mathcal{F}$ of events and some collection of corresponding probability measures $\mathcal{P}$.

Given a target distribution $P \in \mathcal{P}$, represented by a collection of independent patient images $X_1, \cdots, X_n \in \X$, and a learned distribution $\tilde{P}\in \mathcal{P}$, represented by a collection of synthetic images $\tilde{X}_1, \cdots, \tilde{X}_m \in \X$, the goal is to assess the differences between $P$ and $\tilde{P}$. We write $X$ and $\tilde{X}$ as generic random objects that are distributed according to $P$ and $\tilde{P}$ respectively. 

Since the data under consideration are images, the sample space $\X$ is often not endowed with any obvious algebraic and metric structures to enable direct comparisons. For evaluation, instead of directly operating in $\X$, it is natural to consider Euclidean representations of such data. This is commonly achieved by considering an embedding map $f: \X \to \R^p$. Write $f_1, \cdots, f_p: \X \to \R$ for the component functions of $f$, which can be interpreted as feature maps. It is crucial that the chosen embedding map extracts relevant and informative features, since the downstream evaluation operates exclusively in this embedded space. 

In chest X-ray images, the clinically relevant features that guide evaluation often include the positions and sizes of various anatomical structures as well as pathological markers \citep{broder2011imaging}. We discuss specific choice of feature embedding maps for CXR images in section \ref{sec:practical}.  In the context of natural images, $f$ is often chosen to be the last layer representation of a pretrained deep neural network, such as the Inception v3 model \citep{heusel2017gans}. When necessary, we append an additional subscript to indicate the choice of embedding map; for example, $f_{p,inception}$ denotes the $p$th coordinate of the Inception embedding. We use $\rho$ as an index that runs through $1 \cdots p$. 

It is natural to compare the features $f(X)$ and $f(\tilde{X})$. The Fr\'echet Inception Distance (FID) \citep{heusel2017gans}, the most commonly adopted evaluation metric, assumes that $f_{inception}(X)$ and $f_{inception}(\tilde{X})$ both follow multivariate normal distributions. Concretely, given the embedding map $f_{\mathrm{inception}} : \X \to \R^p$, consider the random vectors
$$
Z = f_{\mathrm{inception}}(X), 
\qquad 
\tilde Z = f_{\mathrm{inception}}(\tilde X),
$$
where $X$ and $\tilde X$ denote patient and synthetic images, respectively. FID compares the distributions of $Z$ and $\tilde Z$ by approximating each with a multivariate Gaussian distribution in $\R^p$. Let $(\mu, \Sigma)$ and $(\tilde \mu, \tilde \Sigma)$ denote the mean and covariance of $Z$ and $ \tilde Z$. The FID is defined as
$$
\mathrm{FID}(P, \tilde P)
= \|\mu - \tilde \mu\|_2^2
+ \mathrm{tr}\!\left(\Sigma + \tilde \Sigma 
- 2(\Sigma^{1/2}\tilde \Sigma \Sigma^{1/2})^{1/2}\right),
$$
which corresponds to the squared 2-Wasserstein distance between the Gaussian approximations. In practice, $\mu$, $\tilde \mu$, $\Sigma$, and $\tilde \Sigma$ are unknown and are replaced by their empirical counterparts computed from the samples $\{Z_i\}_{i=1}^n$ and $\{\tilde Z_j\}_{j=1}^m$ during estimation.

Thus, FID evaluates discrepancies between embedded distributions through differences in their first and second moments under a Gaussian assumption. As shown in prior work \citep{jayasumana2024rethinking} and in our simulations in Section \ref{sec:empirical}, such assumptions are not always appropriate and, moreover, they ignore valuable tail and higher moment information. 

Contrary to the approach that FID takes, we would like to compare the tails $P(|f_\rho(X)| > s)$ and $P(|f_\rho(\tilde{X})| > s)$ as well as $\ell$\textsuperscript{th} moments $E(f_\rho^{\ell}(X))$ and $E(f_\rho^{\ell}(\tilde{X}))$ of the features, where $\rho$ is an index that runs through $1, \cdots, p$. Directly estimating the tails $P(|f_\rho(X)| > s)$ and moments $E(f_\rho^{\ell}(X))$ can be problematic. The event $\{|f_\rho(X)| > s\}$ occurs rarely, so the natural estimator $\frac{1}{n} \sum_{j = 1}^n 1_{{f_\rho(X_j) > s}}$, where $1$ denotes the indicator function, may exhibit prohibitively high variance. For distributions with heavy tails, higher order moments $E(f_\rho^{\ell}(X))$ might not exist, in which case the natural estimator $\frac{1}{n} \sum_{j = 1}^n f_\rho^{\ell}(X_j)$ does not converge to any meaningful quantity. 

Observing that the smoothness at the origin of the characteristic function of a real-valued random variable encodes valuable information about both the tail and moments of a distribution, we propose to use the characteristic function as a more stable proxy to compare feature moments and tails. 

\subsection{Embedded Characteristic Score (ECS) Definition}

We define the embedded characteristic score (ECS) here. 

\begin{definition}
Given an embedding map $f:\X \to \R^p$, and some $T > 0$, we define the \textit{embedded characteristic score} as $r_{f, T}(P, \tilde{P}) = \frac{1}{pT}\sum_{\rho = 1}^p||E\exp(\mi T f_\rho(X)) - E\exp(\mi T f_\rho(\tilde{X}))||_2$. 
\end{definition}

Here, $T$ is taken as a positive number that is close to $0$. We suggest computing $r_{f, T}$ for different values of $T$ close to $0$, which leads to a comprehensive comparison in practice. When the context is clear, we often suppress the subscript $f$ and $\T$. When independent samples $X_1, \cdots, X_n$ from $P$ and $\tilde{X}_1, \cdots, \tilde{X}_m$ from $\tilde{P}$ are available, a natural estimator for the embedded characteristic score is  
\begin{align*}
    \hat{r}(P, \tilde{P}) = \frac{1}{pT}\sum_{\rho = 1}^p||\frac{1}{n}\sum_{i = 1}^n\exp(\mi T f_\rho(X_i)) - \frac{1}{m}\sum_{j = 1}^m\exp(\mi T f_\rho(\tilde{X}_j))||_2\;.
\end{align*} 

\begin{algorithm}[t]
\caption{Estimating the embedded characteristic score (ECS).}
\label{alg:ecs}
\begin{algorithmic}[1]
\Require patient images $\{X_i\}_{i=1}^n \sim_{\text{i.i.d.}} P$, synthetic images $\{\tilde X_j\}_{j=1}^m \sim_{\text{i.i.d.}} \tilde P$, embedding $f:\mathcal{X}\to\mathbb{R}^p$, frequency $T>0$
\Ensure $\hat r_{f,T}(P,\tilde P)$
\For{$\rho = 1,2,\ldots,p$}
  \State $J_\rho \gets \frac{1}{n}\sum_{i=1}^n \exp\!\bigl(\mi\, T\, f_\rho(X_i)\bigr)$
  \State $K_\rho \gets \frac{1}{m}\sum_{j=1}^m \exp\!\bigl(\mi\, T\, f_\rho(\tilde X_j)\bigr)$
  \State $Q_\rho \gets \left|J_\rho - K_\rho\right|$
\EndFor
\State \Return $\frac{1}{pT}\sum_{\rho=1}^p Q_\rho$
\end{algorithmic}
\end{algorithm}

We give a theorem that shows the consistency of $\hat{r}_{f,T}(P, \tilde{P})$. 

\begin{theorem}\label{thm:converge}
As $n, m \to \infty$, $\hat{r}_{f, T}(P, \tilde{P})$ converges to $r_{f, T}(P, \tilde{P})$ in probability. 
\end{theorem}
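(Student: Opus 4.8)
The plan is to reduce the statement to the weak law of large numbers applied coordinatewise, followed by the continuous mapping theorem and the fact that convergence in probability is preserved under finite sums. Fix $\rho \in \{1, \dots, p\}$. The key observation is that, although $f_\rho$ itself need not be integrable, the complex-valued random variable $\exp(\mi T f_\rho(X)) = \cos(T f_\rho(X)) + \mi \sin(T f_\rho(X))$ has modulus $1$ and is therefore bounded; in particular its real and imaginary parts are bounded, hence integrable, real-valued random variables. First I would apply the weak law of large numbers separately to the empirical averages $\frac{1}{n}\sum_{i=1}^n \cos(T f_\rho(X_i))$ and $\frac{1}{n}\sum_{i=1}^n \sin(T f_\rho(X_i))$ of i.i.d.\ integrable random variables, to conclude that $J_\rho := \frac{1}{n}\sum_{i=1}^n \exp(\mi T f_\rho(X_i))$ converges in probability to $\phi_\rho := E\exp(\mi T f_\rho(X))$, viewing both as elements of $\R^2 \cong \mathbb{C}$. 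The identical argument applied to the synthetic samples gives $K_\rho := \frac{1}{m}\sum_{j=1}^m \exp(\mi T f_\rho(\tilde X_j)) \to \tilde\phi_\rho := E\exp(\mi T f_\rho(\tilde X))$ in probability as $m \to \infty$.

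Next I would combine the two limits. Since $J_\rho \to \phi_\rho$ and $K_\rho \to \tilde\phi_\rho$ in probability, the pair $(J_\rho, K_\rho)$ converges in probability to $(\phi_\rho, \tilde\phi_\rho)$ (by a union bound on the two ``bad'' events, and also immediate from the independence of the real and synthetic samples). The map $(z, w) \mapsto \|z - w\|_2$ is continuous on $\mathbb{C}^2 \cong \R^4$, so by the continuous mapping theorem $Q_\rho = \|J_\rho - K_\rho\|_2 \to \|\phi_\rho - \tilde\phi_\rho\|_2$ in probability. Finally, since the estimator $\hat r_{f,T}(P, \tilde P) = \frac{1}{pT}\sum_{\rho=1}^p Q_\rho$ is a fixed linear combination of finitely many terms, each converging in probability, the whole sum converges in probability to $\frac{1}{pT}\sum_{\rho=1}^p \|\phi_\rho - \tilde\phi_\rho\|_2 = r_{f,T}(P, \tilde P)$, which is the claim.

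There is no serious obstacle here; the argument is essentially bookkeeping around the boundedness of $\exp(\mi T f_\rho(\cdot))$. The only points needing a little care are: (i) making precise the identification of the complex-valued characteristic-function estimate with a point in $\R^2$ so that ``$\|\cdot\|_2$'' and the standard real-variable limit theorems apply; (ii) handling the two indices $n$ and $m$ jointly rather than sequentially; and (iii) propagating the mode of convergence through the continuous (but nonlinear) map $z \mapsto \|z\|_2$ and then through the finite sum over $\rho$. If one wanted a quantitative version, one could instead bound $\var(J_\rho) = O(1/n)$ and $\var(K_\rho) = O(1/m)$ directly --- again using $|\exp(\mi T f_\rho(X))| = 1$ --- and combine Chebyshev's inequality with the reverse triangle inequality $\bigl|\,\|J_\rho - K_\rho\|_2 - \|\phi_\rho - \tilde\phi_\rho\|_2\,\bigr| \le \|J_\rho - \phi_\rho\|_2 + \|K_\rho - \tilde\phi_\rho\|_2$ to obtain an explicit $O(n^{-1/2} + m^{-1/2})$ rate, but this refinement is not needed for the stated theorem.
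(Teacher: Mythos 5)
Your proposal is correct and follows essentially the same route as the paper's own proof: the weak law of large numbers applied to the bounded random variables $\exp(\mi T f_\rho(\cdot))$, the continuous mapping theorem for the norm, and closure of convergence in probability under finite sums. Your version is somewhat more careful on the details (the real/imaginary decomposition, joint handling of $n$ and $m$, and the optional Chebyshev rate), but the argument is the same.
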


\begin{proof}
 The proof is given in the Appendix.
\end{proof}
We now show that $r_{f, T}$ is a pseudometric. 

\begin{theorem}\label{thm:pseudo}
    The embedded characteristic score $r_{f, T}(\cdot, \cdot)$ is a pseudometric on the space of probability measures on $\X$. 
\end{theorem}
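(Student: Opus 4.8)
The plan is to verify directly the four defining properties of a pseudometric: non-negativity, vanishing on the diagonal, symmetry, and the triangle inequality. Throughout, write $\phi_\rho^P := E\exp(\mi T f_\rho(X))$ for $X \sim P$, regarded as an element of $\mathbb{C} \cong \R^2$, so that by definition $r_{f,T}(P,\tilde P) = \frac{1}{pT}\sum_{\rho=1}^p \|\phi_\rho^P - \phi_\rho^{\tilde P}\|_2$, where $\|\cdot\|_2$ is the Euclidean norm on $\R^2$ (equivalently, the modulus on $\mathbb{C}$). Fixing this identification at the outset is the one point that needs care, so that "the norm" appearing in the definition is unambiguous and the ordinary triangle inequality applies coordinatewise.

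Non-negativity is immediate: $r_{f,T}$ is a sum of Euclidean norms scaled by the positive constant $1/(pT)$, hence $r_{f,T}(P,\tilde P) \ge 0$ for all $P, \tilde P$. If $P = \tilde P$, then $\phi_\rho^P = \phi_\rho^{\tilde P}$ for every $\rho$, so every summand is zero and $r_{f,T}(P,P) = 0$. Symmetry holds termwise, since $\|\phi_\rho^P - \phi_\rho^{\tilde P}\|_2 = \|\phi_\rho^{\tilde P} - \phi_\rho^P\|_2$, and therefore $r_{f,T}(P,\tilde P) = r_{f,T}(\tilde P, P)$.

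For the triangle inequality, fix probability measures $P, Q, R$. For each $\rho \in \{1,\dots,p\}$, decompose $\phi_\rho^P - \phi_\rho^R = (\phi_\rho^P - \phi_\rho^Q) + (\phi_\rho^Q - \phi_\rho^R)$ and apply the triangle inequality for $\|\cdot\|_2$ on $\R^2$ to obtain $\|\phi_\rho^P - \phi_\rho^R\|_2 \le \|\phi_\rho^P - \phi_\rho^Q\|_2 + \|\phi_\rho^Q - \phi_\rho^R\|_2$. Summing over $\rho = 1,\dots,p$ and multiplying by the positive factor $1/(pT)$ preserves the inequality, giving $r_{f,T}(P,R) \le r_{f,T}(P,Q) + r_{f,T}(Q,R)$. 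This establishes that $r_{f,T}$ is a pseudometric.

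I expect no genuine obstacle here; the only substantive remark is on why $r_{f,T}$ is merely a pseudometric rather than a metric, i.e.\ why the separation axiom fails. The condition $r_{f,T}(P,\tilde P) = 0$ forces only $\phi_\rho^P = \phi_\rho^{\tilde P}$ at the single frequency $T$ for each coordinate $\rho$; this does not determine the joint law of $f(X)$, and even a full match of the feature laws would not force $P = \tilde P$ when the embedding $f$ is not injective. So I would close the proof by noting that none of the steps above recovers the identity-of-indiscernibles property, which is consistent with the claim as stated.
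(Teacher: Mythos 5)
Your proof is correct and follows essentially the same route as the paper's: non-negativity, vanishing on the diagonal, and symmetry read off directly from the definition, and the triangle inequality obtained termwise from the triangle inequality for $\|\cdot\|_2$ and then summed over $\rho$ and scaled by $1/(pT)$. Your closing remark on why the separation axiom fails (matching characteristic functions only at the single frequency $T$, plus non-injectivity of $f$) is a useful addition the paper does not spell out, but it does not change the argument.
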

\begin{proof}
 The proof is given in the Appendix. 
\end{proof} 

\subsection{Characteristic function around the origin}
We now review related facts regarding the characteristic function, and we provide results that motivate estimating the characteristic function's values around the origin. See \cite{lukacs1970characteristic} for an authoritative treatment of characteristic functions.  

Any real-valued random variable $Y$ is associated with a unique characteristic function $\phi_Y(t) = E(\exp(\mi tY))$. $\phi_Y$ is uniformly continuous in $t$ and exhibits the bound $|\phi_Y(t)| \leq 1$, with $\phi_Y(0)$ always equal to $1$. $\phi_Y(-t)$ is given by the complex conjugate of $\phi_Y(t)$. It is also well known that the smoothness of $\phi_Y(t)$ around $0$ encodes information about the moments of $Y$. In particular, if $E(Y^\ell)$ exists for some integer $\ell > 0$, then $E(Y^\ell) = i^{-\ell} \phi^{(\ell)}_Y(0)$, where the superscript denotes the $\ell$\textsuperscript{th} derivative. 

We now relate the tail behavior of $Y$ to its characteristic function. We first note the following lemma from \cite{durrett2019probability}. 

\begin{lemma}[\cite{durrett2019probability}]
\label{durrett_tail}
For any real-valued random variable $Y$ and any $u > 0$, $P(|Y| > 2/u) \leq \frac{1}{u} \int_{-u}^u (1 - \phi_Y(t)) dt$, where $\phi_Y(t)$ is the characteristic function of $Y$. While $\phi_Y(t)$ is a complex-valued function, the quantity $\int_{-u}^u (1 - \phi_Y(t)) dt$ is always real-valued. 
\end{lemma}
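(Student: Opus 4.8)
The plan is to evaluate the right-hand side in closed form, reduce it to the expectation of an elementary function of $Y$, and then bound that expectation from below by restricting to a suitable event. This is the classical argument behind the inequality (see \cite{durrett2019probability}); the version I would carry out proceeds in three steps.

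First I would write $1 - \phi_Y(t) = E\bigl(1 - \exp(\mi tY)\bigr)$ and apply Fubini's theorem to interchange the integral over $t \in [-u,u]$ with the expectation. This is legitimate because $|1 - \exp(\mi tY)| \le 2$ uniformly and $[-u,u]$ has finite length, so the joint integrability condition holds. This gives
\[
\frac{1}{u}\int_{-u}^u \bigl(1 - \phi_Y(t)\bigr)\,dt = \frac{1}{u}\,E\!\left[\,\int_{-u}^u \bigl(1 - \exp(\mi tY)\bigr)\,dt\,\right].
\]
Second, I would compute the inner integral explicitly. For $Y \neq 0$ one has $\int_{-u}^u \exp(\mi tY)\,dt = 2\sin(uY)/Y$, so the bracketed quantity equals $2u - 2\sin(uY)/Y = 2u\bigl(1 - \sin(uY)/(uY)\bigr)$; for $Y = 0$ the integrand vanishes identically and the same expression holds under the convention $\sin(0)/0 := 1$. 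Hence the right-hand side equals $2\,E\bigl[1 - \sin(uY)/(uY)\bigr]$. Incidentally, this representation also makes transparent the remark that the integral is real-valued.

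Third, and this is the crux, I would invoke two elementary facts about $g(x) := 1 - \sin(x)/x$: that $g(x) \ge 0$ for every real $x$ (since $|\sin x| \le |x|$), and that $g(x) \ge 1/2$ whenever $|x| \ge 2$ (since then $|\sin(x)/x| \le 1/|x| \le 1/2$). Combining these,
\[
2\,E\bigl[g(uY)\bigr] \;\ge\; 2\,E\bigl[g(uY)\,\mathbf{1}\{|uY| \ge 2\}\bigr] \;\ge\; 2\cdot\tfrac12\,P\bigl(|uY| \ge 2\bigr) \;=\; P\bigl(|Y| \ge 2/u\bigr) \;\ge\; P\bigl(|Y| > 2/u\bigr),
\]
which is exactly the asserted bound.

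The main obstacle here is not any deep probabilistic input but getting the constants to line up: the factor $2$ in the event $\{|Y| > 2/u\}$ is precisely the threshold beyond which $\sin(x)/x$ is guaranteed to have dropped below $\tfrac12$, so the whole inequality rests on the quantitative estimate $g(x) \ge \tfrac12$ for $|x| \ge 2$. The remaining ingredients — justifying the Fubini interchange and the closed-form evaluation of $\int_{-u}^u \exp(\mi tY)\,dt$ — are routine.
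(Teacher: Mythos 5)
Your proof is correct: the Fubini interchange is justified by the uniform bound $|1-\exp(\mi tY)|\le 2$ on a finite interval, the closed-form evaluation giving $2\,E[1-\sin(uY)/(uY)]$ is right, and the two sinc estimates ($g\ge 0$ everywhere, $g\ge 1/2$ on $\{|uY|\ge 2\}$) yield exactly the stated bound. The paper itself offers no argument here, deferring entirely to the citation of \cite{durrett2019probability}; what you have written is precisely the standard truncation proof from that reference, so your proposal matches the proof the paper relies on and fills in the details it omits.
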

\begin{proof}
    This inequality is due to 3.3.1 of \cite{durrett2019probability}, which is proved in Theorem 3.3.17 of the same reference. 
\end{proof}
An immediate consequence of this lemma is the following corollary concerning the behavior of the characteristic function around the origin. 

\begin{corollary}\label{origin_tail}
The random variable $Y$ satisfies the tail inequality 
$$P(|Y| > 2s) \leq s \int_{-1/s}^{1/s} (\phi_{Y}(0) - \phi_{Y}(t)) dt$$
for any $s > 0$.  
\end{corollary}

Corollary \ref{origin_tail} suggests that the tail of $Y$ can be bounded by a quantity that represents the smoothness of $\phi_Y$ around the origin. To understand this intuitively, consider an approximation of this integral by the elementary trapezoidal rule, where the quantity $s\int_{-1/s}^{1/s} ( \phi_Y(0) - \phi_Y(t))dt $ is approximated by $2(1 - \text{Re}(\phi_Y(1/s))$. Under mild smoothness conditions, the absolute error of this approximation can be bounded by $C|\frac{1}{s}|^3$, where $C$ here is a positive constant that depends on the value of the second derivative of $\phi_Y$ at a location between $-1/s$ and $1/s$. As $s$ becomes larger, the approximation error becomes smaller. Hence the value of the characteristic function at small $1/s$ provides useful information about the tail $P(|Y| > 2s)$. 

The characteristic function of $Y$ around the origin also gives information about the moments of $Y$. Assume that the $\ell$\textsuperscript{th} moment $E(Y^\ell)$ exists. Taking the Taylor series expansion of $\phi_Y$ around the origin yields \citep{durrett2019probability}
$$\phi_Y(t) = \sum_{v = 0}^\ell\frac{(\mi t)^vE(Y^v)}{v!} + o(t^\ell),$$
where $o(t^\ell)/t^\ell \to 0$ as $t \to 0$. 

% derivation of trapezoidal result: $$s\frac{2}{s} \frac{1}{2}( \phi_Y(0) - \phi_Y(1/s) + \phi_Y(0) - \phi_Y(-1/s))$$
% $$= 2 \phi_Y(0) - (\phi_Y(1/s) + \phi_Y(1/s)^*) = 2(1 - \text{Re}(\phi_Y(1/s))$$

In the context of ECS, large differences in the $\ell$\textsuperscript{th} moment of $f_\rho(X)$ and $f_\rho(\tilde{X})$ might be reflected in the difference between their corresponding characteristic functions $\phi_{f_\rho(X)}$ and $\phi_{f_\rho(\tilde{X})}$ near the origin. Characteristic functions offer a more robust alternative to direct moment estimation. The natural estimator $\frac{1}{n}\sum_{i = 1}^n\exp(\mi Tf_\rho(X_i))$ always converges, regardless of whether higher order moments exist. This also avoids the rare data problem of the tail estimator $\frac{1}{n} \sum_{j = 1}^n 1_{{f_\rho(X_j) > s}}$.

When the random variable $Y$ admits a probability density under the Lebesgue measure, the characteristic function is the Fourier transform of the probability density up to a sign reversal. The value of the characteristic function at point $T$ then corresponds to the amplitude associated with the frequency $T$. While estimating $\phi_Y$ across the entire frequency domain is challenging, estimating $\phi_Y$ at a designated point $T$ is a more tractable task. The Fourier uncertainty principles \citep{hogan2005time} imply, roughly, that functions that are well dispersed in the origin domain have Fourier transforms that are highly localized in the frequency domain. This implies that low-frequency behavior captures coarse, global features of a distribution, which can indirectly reflect tail properties. This motivates evaluating characteristic functions at values of $T$ near the origin in the ECS.

\section{Practical Considerations in ECS}
\label{sec:practical}
There are several practical considerations when using the ECS approach to evaluate synthetic images. First, the practitioner must specify (i) the choice of the embedding map $f$ and (ii) the frequency parameter $T$ at which the characteristic function is evaluated. These choices determine the aspects of the data distribution that are emphasized by the resulting score and should therefore be guided by the scientific or application context.

Second, because ECS is a continuous score without an intrinsic scale, its magnitude can be difficult to interpret in isolation, particularly when comparing across datasets or against other evaluation metrics that operate on different numerical scales. To address this, we propose a simple resampling procedure to calibrate ECS that provides a data-driven reference scale for interpreting the magnitude of observed discrepancies.

\subsection{Choice of Embeddings} 
The embedding map $f:\mathcal X \to \mathbb R^p$ should provide a informative feature representation of the underlying images. Such an embedding map is most often provided by a pre-trained neural network, given their strong performance across a broad range of image-related tasks. For many computer vision applications involving natural images, the Inception embedding map is the commonly used default. 

In medical imaging settings, however, the Inception embedding may not be the most appropriate choice. First, the Inception network is trained on generic natural images rather than on modality-specific medical data. Second, the resulting Inception embeddings are not easily interpretable, making it difficult to extract clinically meaningful insights that could guide model development or comparison. 

In the case of chest X-ray imaging, radiologists routinely rely on a number of clinically meaningful features to inform diagnosis. Examples include asymmetry measures such as the left–right lung ratio, which can indicate conditions such as pneumonia or atelectasis \citep{tulo2023evaluation, armato1999computerized}, as well as the cardiothoracic ratio (CTR), defined as the ratio of the maximal horizontal cardiac diameter to the thoracic diameter, and widely used as a screening metric for cardiomegaly and heart failure \citep{brakohiapa2017radiographic, kizilgoz2025update}. These quantities can be naturally computed from the outputs of modern pretrained chest X-ray segmentation neural network models, which provide accurate delineations of anatomical structures such as the lungs and heart. Such models enable the extraction of interpretable geometric and volumetric features, including organ areas, relative volumes, and clinically meaningful ratios, that are routinely used in practice and well suited for distributional evaluation. We recommend using these clinically grounded features as embedding functions within the ECS framework. In Section \ref{sec:xray}, we demonstrate this approach by using a pre-trained segmentation model to extract these features for comparison. 

\subsection{Choice of $T$}
The frequency parameter $T$ controls the scale at which discrepancies between distributions are probed. As discussed in Section~\ref{sec:meth}, evaluating characteristic functions near the origin emphasizes low-frequency behavior, which reflects coarse distributional features and is closely tied to moment and tail properties. Smaller values of $T$ place greater emphasis on global structure and tail behavior. In practice, we recommend evaluating ECS over a range of $T$ values near zero rather than relying on a single choice, providing a more robust picture of distributional differences.

Importantly, ECS is not intended to be optimized over choices of $f$ or $T$. We view ECS as a diagnostic tool whose value lies in comparative evaluation across models, rather than as a criterion to be tuned to achieve favorable outcomes. 

\subsection{Resampling-based calibration of ECS}

Because ECS is a continuous discrepancy measure, its magnitude is most meaningful when interpreted relative to a suitable reference. We therefore propose a simple resampling-based calibration procedure that contextualizes the scale of ECS values. The main idea is to compute the ECS between patient images to quantify intrinsic variability. ECS values comparing patient and synthetic images are then interpreted relative to this patient–patient baseline.

Recall that ECS compares a collection of $n$ patient images $\{X_i\}_{i=1}^n$ independently sampled from the target distribution $P$ to a collection of $m$ synthetic images $\{\tilde X_j\}_{j=1}^m$ under a fixed embedding map $f$ and frequency parameter $T$. We can construct an empirical reference distribution for ECS by repeatedly drawing two subsets of sizes $n$ and $m$ from the patient images $\{X_i\}_{i=1}^n$ with replacement and computing the ECS between these two subsets. Repeating this procedure $B$ times yields a collection of ECS values $
\left\{ \hat{r}_{f,T}^{(b)}(P,P) \right\}_{b=1}^B$, which characterizes the variability of ECS arising from finite-sample effects under the target distribution. We may then compute the ECS using $n$ patient and $m$ synthetic images $\hat{r}_{f,T}(P,\tilde P)$ and compare to the resampled empirical reference $\{ \hat{r}_{f,T}^{(b)}(P,P) \}_{b=1}^B$. For example, one may report the empirical quantile of $r_{f,T}(P,\tilde P)$ within the reference distribution, or report its ratio relative to the median of the resampled distribution. 

This resampling-based calibration provides an interpretable scale for ECS by comparing discrepancies between patient and synthetic images to the variability observed in patient–patient comparisons. Importantly, while this resampling procedure is similar to the bootstrap \citep{efron2000bootstrap}, the goal here is not to compute confidence intervals nor perform hypothesis tests. Instead, we use this resampling procedure to aid interpretation and comparison of ECS values.

\section{Simulations and Empirical Studies to Evaluate ECS}
\label{sec:empirical}
Here, we evaluate ECS empirically. We divide our empirical studies into three parts. First, we assess the ability of ECS to capture tail and higher order moment information in a simulation, where we are able to precisely control the tail behavior of the underlying distributions. 
Next, we compare the behavior of ECS versus FID on the standard benchmark datasets CIFAR10 and MNIST using Inception v3 embeddings. 
Finally, we demonstrate the use of ECS in a large-scale chest X-ray dataset using clinically relevant embeddings. 

\subsection{ECS on synthetic features}
To evaluate the ability of ECS to capture tail information, we simulate random features from two classes of distributions -- multivariate normal distributions and multivariate $t$ distributions. We choose the multivariate $t$ distribution because it allows us to precisely control the tail and moment behavior as we vary the degree-of-freedom parameter $\text{df}$. As $\text{df}$ becomes large, the multivariate $t$ distribution has lighter tails and more higher order moments defined. On the other hand, as $\text{df}$ becomes smaller, the distribution becomes more heavy-tailed and exhibits more outliers, with the second moment undefined when $\text{df} \leq 2$ and first moment undefined when $\text{df} \leq 1$. A visualization is provided in Figure \ref{fig:pca} in the Appendix. 

We draw $1,000,000$ independent samples from a multivariate normal distribution centered at the origin with an identity covariance matrix with dimension $p = 32$. We then draw $1,000,000$ independent samples from five different $32$-dimensional multivariate $t$ distributions with different degree-of-freedom parameters $(100, 10, 5, 3, 2.01)$. These multivariate $t$ distributions are all chosen to be centered at the origin with an identity covariance matrix, which is achieved by choosing the scale matrix to be $\frac{\text{df}-2}{\text{df}}I_{p}$, where $I_p$ is the $p\times p$ identity matrix. We then compared samples from the multivariate normal distribution with samples from each of these five multivariate $t$ distributions using ECS at $T = 1$ and $T = 0.5$. We report all scores as normalized by the dimension to facilitate comparison.

The above process is repeated five times. We find that, as the degree-of-freedom of the multivariate $t$ distribution decreases towards $2$, which implies heavier tails, the value of the ECS increases (Table \ref{table:compare}). This trend demonstrates that as the tails of the $t$-distribution deviate further from the Gaussian baseline, the ECS captures the widening discrepancy between the two distributions. On the other hand, under the normality assumptions in the FID, the population Fr\'echet distance yields $0$ for all five comparisons, since it only depends on the mean and covariance parameters of the distributions under comparison, which are set to be identical in the simulation. These simulations highlight the need to look beyond the first and second moments when evaluating generative models.  

\begin{table}  
\caption{Mean ECS values $\pm$ standard errors between multivariate normal and $t$ distributions with varying degrees-of-freedom (estimates rounded to 3 decimal places) across five iterations. Note that the population Fr\'echet distance yields $0$ for all of these comparisons.} 
\label{table:compare}
\begin{center}
\begin{tabular}{l|rrr}
& ECS ($T = 1$) & ECS ($T = 0.5$) \\\hline
Normal versus t with df = 100 & $0.002\pm 5\times 10^{-5}$ & $0.001 \pm 5 \times 10^{-5}$  \\
Normal versus t with df = 10 & $0.020 \pm 5\times 10^{-5}$ & $0.004 \pm 4 \times 10^{-5}$  \\
Normal versus t with df = 5 &  $0.054 \pm 7 \times 10^{-5}$ & $0.015 \pm 3\times 10^{-5}$  \\
Normal versus t with df = 3 & $0.129 \pm 1 \times 10^{-4}$ & $0.055 \pm 1 \times 10^{-4}$  \\
Normal versus t with df = 2.01 & $0.379 \pm 4 \times 10^{-5}$ & $0.226 \pm 4 \times 10^{-5}$ \\
\end{tabular}
\end{center}
\end{table}

\subsection{Comparing ECS and FID on natural images}
We now compare the behavior of ECS versus FID on natural images via the CIFAR10 \citep{krizhevsky2009learning} and MNIST \citep{lecun1998gradient} datasets with Inception v3 embeddings. The goal is to assess how ECS and FID behave on standard benchmark datasets. 

The Inception v3 convolutional neural network \citep{szegedy2016rethinking} is a classifier that achieves $21.2$ top-1 and $5.6$ top-5 classification error on ImageNet ILSVRC 2012, a standard benchmark that contains images from $1000$ categories \citep{deng2009imagenet}. This suggests that the last layer embeddings computed by the Inception v3 model capture relevant information about natural images. As a result, Inception v3 embeddings are widely used for evaluating generative models and synthetic natural images. 

The ECS approach can work with any available embeddings. To conduct a fair comparison between FID and ECS on natural images, we use Inception v3 embeddings. To generate synthetic data, we use a deep convolutional generative adversarial network (DC-GAN) \citep{radford2016unsupervised} that is pretrained on CIFAR10 and MNIST data. Implementation details of the DC-GAN model can be found in the Appendix. We generate $1000$ synthetic images using this model. We also randomly select $1000$ images from each of the CIFAR10 and MNIST testing datasets. A sample of these synthetic and source images are shown in Figure \ref{fig:image}. 

\begin{figure}[t!]
    \centering
    \begin{subfigure}[t]{0.5\textwidth}
        \centering
        \includegraphics[height=2.3in]{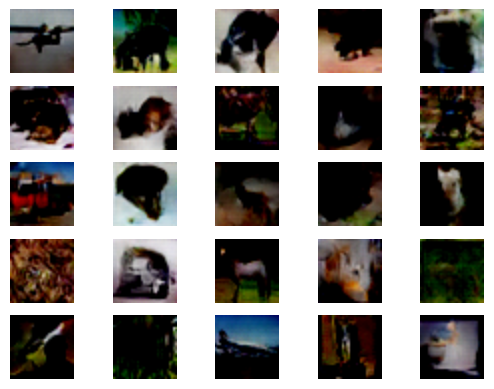}
        \caption{}
    \end{subfigure}%
    ~ 
    \begin{subfigure}[t]{0.5\textwidth}
        \centering
        \includegraphics[height=2.3in]{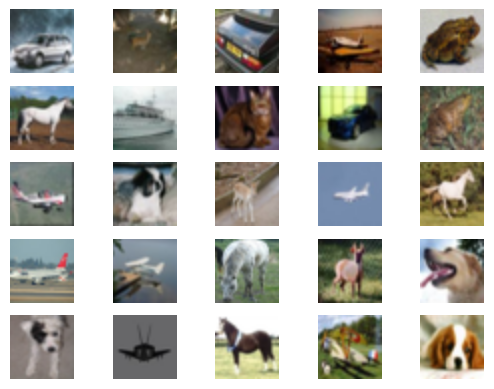}
        \caption{}
    \end{subfigure}

        \begin{subfigure}[t]{0.5\textwidth}
        \centering
        \includegraphics[height=2.3in]{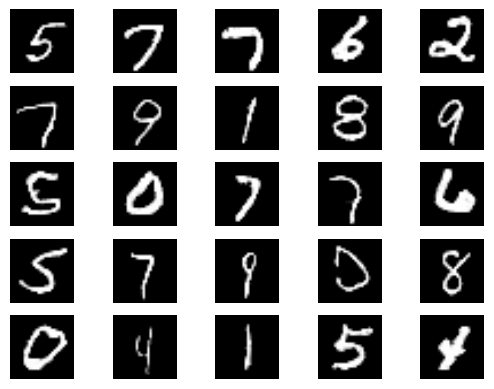}
        \caption{}
    \end{subfigure}%
    ~ 
    \begin{subfigure}[t]{0.5\textwidth}
        \centering
        \includegraphics[height=2.3in]{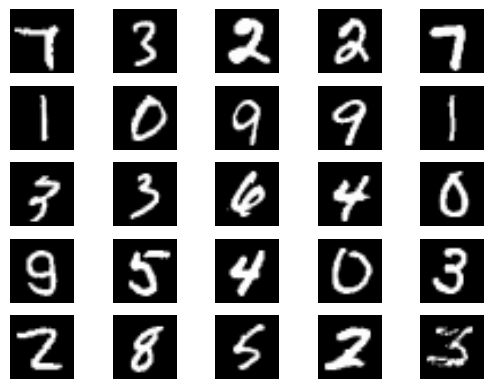}
        \caption{}
    \end{subfigure}
    \caption{\textbf{Sample of synthetic and real images.} (a) 25 synthetic images generated from a DC-GAN model pretrained on CIFAR10 data. (b) 25 source images from the CIFAR10 testing dataset. (c) 25 synthetic images generated from a DC-GAN model pretrained on MNIST data. (d) 25 source images from the MNIST testing dataset.}
    \label{fig:image}
\end{figure}

We then computed $1000$-dimensional embeddings over these source and synthetic images using the pretrained Inception v3 model available in PyTorch \citep{paszke2019pytorch, marcel2010torchvision}. We first assess whether these embeddings collectively satisfy the normality assumption in the FID approach. We perform two statistical tests on the embeddings: the Mardia test \citep{mardia1970measures}, which is based on kurtosis statistics, and the Henze-Zirkler test \citep{henze1990class}, which assesses normality based on the empirical characteristic function of the residuals (Table \ref{table:test}). We report all p-values smaller than the machine precision value of $2.2 \times 10^{-16}$ as that value. The results of these two tests present clear evidence that the Inception embeddings for both source images and synthetic images are highly non-normal across both the MNIST and CIFAR10 datasets (Table \ref{table:test}). This suggests that the normality assumptions that underlie the FID are not always empirically valid, which is in line with the conclusions of prior work \citep{jayasumana2024rethinking}.  

\begin{table}[h!] 

\caption{Multivariate normality tests for embeddings of real and synthetic images \label{table:test}}
\begin{center}
\begin{tabular}{l|c|c}
&  Mardia Kurtosis Test  & Henze-Zirkler Test \\\hline
\makecell{CIFAR10 Embeddings \\ (Real Images)} & \makecell{Test Statistic $= -1725.478$ \\ p-value $= 2.2 \times 10^{-16}$}  & \makecell{Test Statistic $= 4000$ \\ p-value $= 2.2 \times 10^{-16}$}   \\\hline
\makecell{CIFAR10 Embeddings \\ (Synthetic Images)} & \makecell{Test Statistic $= -10408.918$ \\ p-value $= 2.2 \times 10^{-16}$}   & \makecell{Test Statistic $= 4000$ \\ p-value $= 2.2 \times 10^{-16}$}  \\\hline
\makecell{MNIST Embeddings \\ (Real Images)} & \makecell{Test Statistic $= 811.932$ \\ p-value $= 2.2 \times 10^{-16}$}   & \makecell{Test Statistic $= 4000$ \\ p-value $= 2.2 \times 10^{-16}$}   \\\hline
\makecell{MNIST Embeddings \\ (Synthetic Images)} & \makecell{Test Statistic $= 83693.154$ \\ p-value $= 2.2 \times 10^{-16}$}   & \makecell{Test Statistic $= 4000$ \\ p-value $= 2.2 \times 10^{-16}$}  \\

\end{tabular}
\end{center}
\end{table}

Next, using the same Inception features, we computed the FID and ECS values (for $T = 1, 0.5, 0.1$) between the source and synthetic images of both the MNIST and CIFAR10 datasets (Table \ref{table:eval_results}). We also performed the resampling calibration procedure for $50$ resampling iterations to obtain a reference distribution that captures variability within the source images. We report both the ECS and FID values that we computed, as well as the ratio between the ECS and FID values against the median of their corresponding resampling reference distribution. 

We observe that both FID and ECS scores are higher for CIFAR than MNIST. This is in line with prior literature \citep{wei2022duelgan, miyato2018spectral, liu2023spiking} and likely due to the fact that CIFAR10 is a more complex dataset than MNIST. We also observe that the ECS demonstrates a more pronounced deviation from the reference resampling distribution compared to FID scores. This suggests that the ECS is sensitive to distributional discrepancies, specifically in higher-order moments and tails, that are otherwise obscured by the multivariate normality assumption inherent in the FID calculation. 

\begin{table}[h!] 

\caption{FID and ECS values for real versus synthetic images for both the MNIST and CIFAR10 datasets \label{table:eval_results}}
\begin{center}
\begin{tabular}{l|c|c}
&  MNIST & CIFAR10     \\\hline
ECS ($T = 1$) &  $0.222$ & $0.285$   \\\hline
ECS ($T = 0.5$) &  $0.262$ & $0.402$  \\\hline
ECS ($T = 0.1$) &  $0.279$ & $0.458$  \\\hline
FID per dimension &  $0.016$ & $0.026$ \\\hline
ECS ($T = 1$)/median of reference &  $10.458$ & $9.950$   \\\hline
ECS ($T = 0.5$)/median of reference  &  $11.714$ & $11.620$  \\\hline
ECS ($T = 0.1$)/median of reference  &  $12.539$ & $12.817$  \\\hline
FID per dimension/median of reference &  $3.783$ & $3.078$ \\\hline
\end{tabular}
\end{center}
\end{table}
\section{Chest X-Ray Application}
\label{sec:xray}
Next, we study the performance of ECS on a large-scale chest X-ray dataset. Patient chest X-ray images are obtained from the publicly available ChestX-ray14 database \citep{wang2017chestx}. Synthetic images are generated using a large, progressively growing generative adversarial network (PGGAN) designed for medical imaging known as MediGAN \citep{osuala2023medigan}. Importantly, the MediGAN model is pre-trained on the same ChestX-ray14 database (Figure \ref{fig:cxr_panel}), providing a controlled setting for our evaluation. 

Our goal is to evaluate whether ECS can detect distributional discrepancies between real and synthetic images in a clinically relevant setting. To extract clinically meaningful information for evaluation, we use a feature embedding map specifically designed for chest X-ray data. Concretely, we use TorchXRayVision \citep{cohen2022torchxrayvision}, a publicly available Python package that provides accurate pre-trained models for segmentation of anatomical structures in chest X-ray images. From these segmentations, we extract features corresponding to clinically important quantities, including the volumes of major organs such as the heart and the left--right lungs, as well as ratios such as the CTR that is commonly used in clinical practice. A list of features are included in the Table \ref{tab:cxr_labels} in the Appendix for reference. 

We compute ECS values between patient and synthetic chest X-ray images using these clinically informed features for $T = 1, 0.5, 0.1$. We also report the ratio between the computed ECS values and the median of the reference resampling distribution. We compare against the Fr\'echet distance (FD), which has the same structure as FID but with Inception embedding swapped out for the clinically relevant embedding (Table \ref{table:eval_results_CXR}).  
\begin{table}[h!] 
\caption{FD and ECS values for patient versus synthetic chest X-ray images. \label{table:eval_results_CXR}}
\begin{center}
\begin{tabular}{l|c|c}
&  Absolute Results &  Ratio to Reference Median    \\\hline
ECS ($T = 1$) &  $0.021$ & $5.068$   \\\hline
ECS ($T = 0.5$) &  $0.021$ & $5.002$  \\\hline
ECS ($T = 0.1$) &  $0.021$ & $5.063$  \\\hline
FD per dimension &  $0.003$ & $2.167$ \\\hline
\end{tabular}
\end{center}
\end{table}

We observe that, again, the ECS scores are relatively further from the center of the reference distribution than the Fr\'echet distances (Figure \ref{fig:ecs_bootstrap_panel}). This suggests that there are differences in higher order moments and tails between the patient and synthetic CXR data using segmentation and feature extraction to summarize the data. Perhaps more importantly, due to the interpretable nature of the embedding used, we can visualize the differences in patient versus synthetic distributions of particular features that underlie the ECS scores. We compare histograms of the features representing the cardiothoracic ratio (CTR) and the left--right lung ratio (Figure \ref{fig:ctr_lungratio}). In the case of left--right lung ratios, we observe that, while the synthetic data matches the center of the real data distribution closely, the real distribution has substantially higher variability---including heavier tails and outliers---that the synthetic data did not capture. For CTR, we observe that the synthetic data distribution did not match the real data distribution well, not only missing the outliers and heavy tails, but also missing the center of the distribution.  

Both the CTR and the left–right lung ratio are clinically meaningful quantities that play an important role in diagnostic assessment \citep{fraser1990diagnosis}. Deviations in the left–right lung ratio may reflect asymmetry caused by conditions such as pneumonia, atelectasis, or pleural effusion, while an elevated CTR is commonly used as a screening indicator for cardiomegaly and heart failure. The discrepancies detected by ECS in these features therefore suggest that the synthetic data may under-represent extreme or atypical presentations that could indicate pathological conditions. % this is great!

\begin{figure}[t]
\centering
\includegraphics[width=0.9\textwidth]{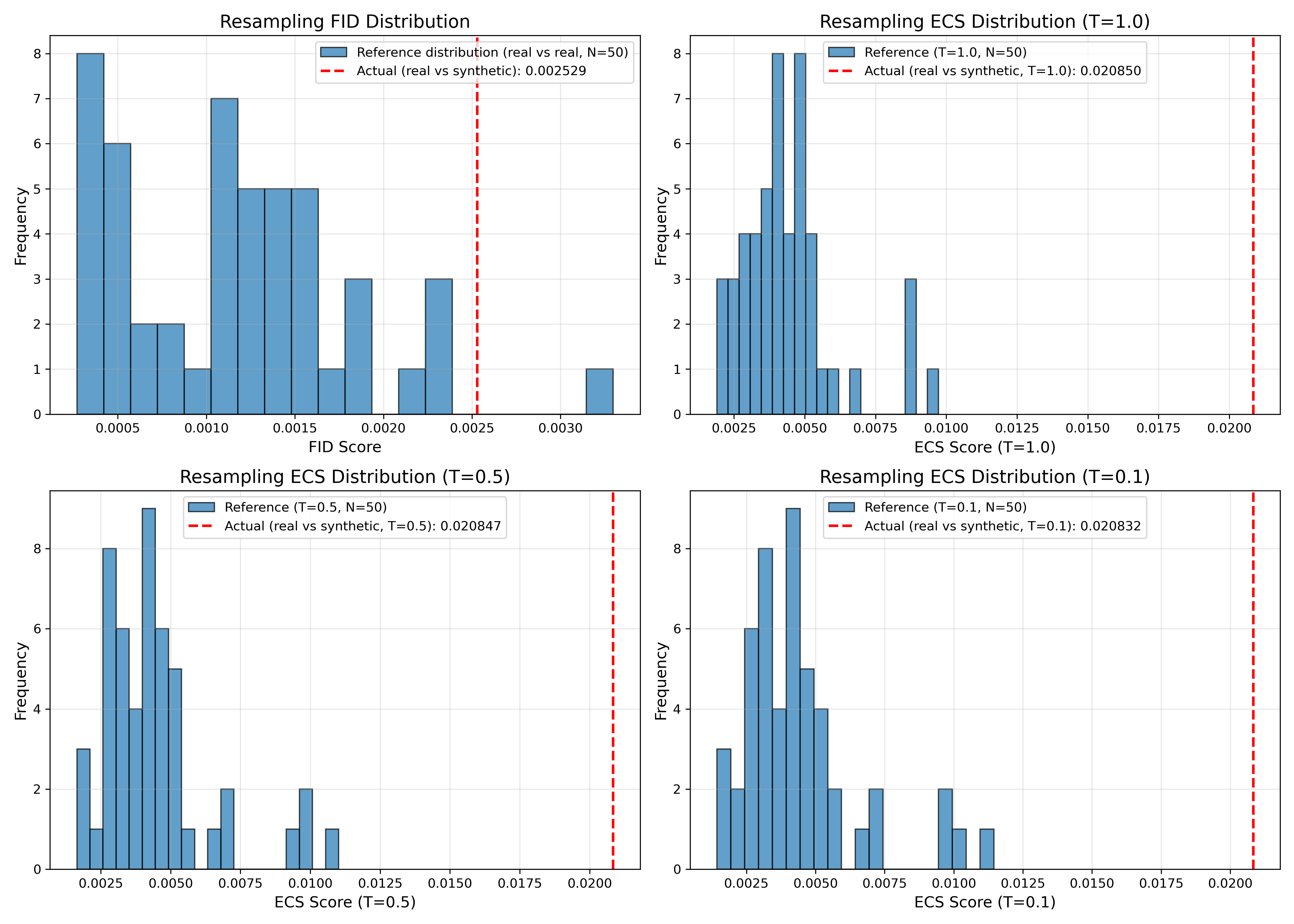}
\caption{\textbf{Resampling-based calibration of ECS for chest X-ray images.}
Comparing the reference distribution of ECS values obtained from resampled real data with the ECS value computed between patient and synthetic images (red vertical line).}
\label{fig:ecs_bootstrap_panel}
\end{figure}

\begin{figure}[t]
\centering
\includegraphics[width=0.9\textwidth]{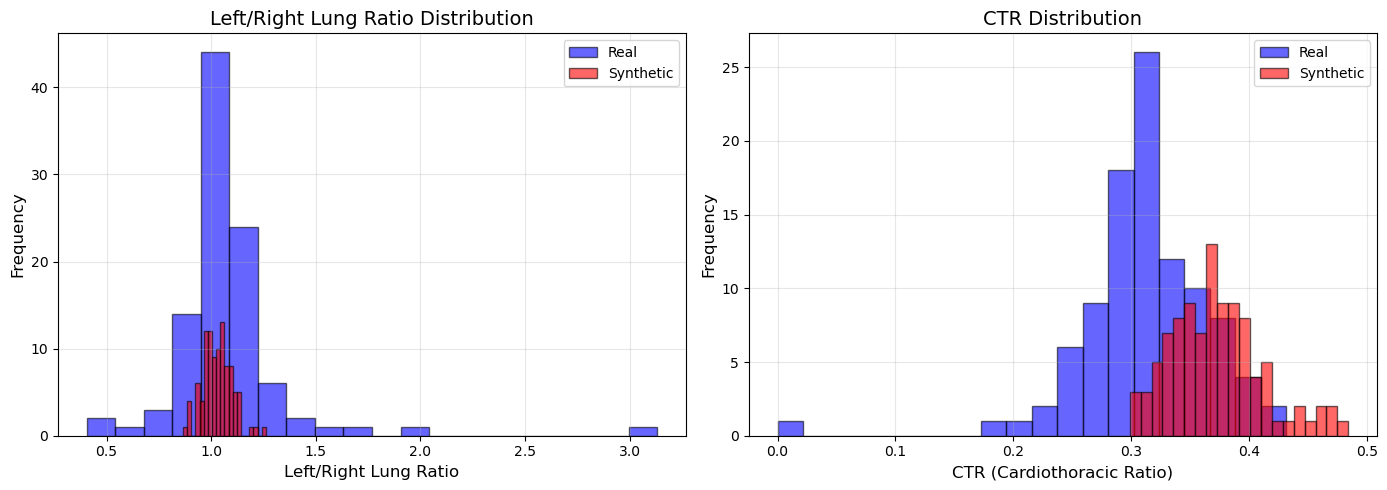}
\caption{\textbf{Comparison of clinically relevant features between real and synthetic chest X-ray images.}
Empirical distributions of the cardiothoracic ratio (CTR) and the left--right lung volume ratio computed from patient chest X-ray images (blue) and synthetic images (red). }
\label{fig:ctr_lungratio}
\end{figure}

\section{Discussion}
\label{sec:discussion}

A core aspect of evaluating synthetic medical images lies in how the problem is framed. Synthetic medical images are typically explicitly tagged as such, so the objective is not to classify them as authentic or synthetic. It is also understood that synthetic and patient images necessarily differ at a distributional level, given that they are generated through fundamentally different processes. Accordingly, the goal is not to conduct a binary hypothesis test of distributional equality. Rather than asking whether statistically significant differences exist, the more meaningful question is how those differences manifest along clinically important dimensions, particularly in features relevant for diagnosis and downstream use. This perspective motivates the use of a continuous discrepancy measure such as ECS. 

Our empirical study on chest X-ray images highlights the value of combining clinically meaningful and interpretable embeddings with distributional evaluation methods that are sensitive to tail behavior. By focusing on clinically relevant features such as the CTR and the left--right lung volume ratio, we uncovered substantial discrepancies in the tails and variability of these features between patient and synthetic CXR images. 

These findings are particularly important in medical imaging, where pathological cases often lie in the tails of the feature distribution rather than near its center. Synthetic images that fail to capture this tail behavior may under-represent important cases, with potentially adverse consequences for downstream model development and evaluation. Our results underscore the need for evaluation tools explicitly assess distributional properties of synthetic medical images.

The perception that deep generative models are highly expressive has encouraged practitioners to use synthetic data from these powerful models to substitute or supplement authentic data in downstream applications, particularly in medical imaging. This belief is grounded in the universal approximation properties of deep neural networks \citep{cybenko1989approximation, barron1993universal, hornik1991approximation} and reinforced by the broader generative modeling literature \citep{doersch2016tutorial, kingma2019introduction}.

However, recent work shows that broad classes of commonly used deep generative models are structurally limited to learning light-tailed distributions \citep{tam2024On}. When the true data-generating distribution is heavy-tailed, this limitation may lead to systematic under-representation of extreme or atypical cases. In clinical settings, such tail observations often correspond to pathological conditions or severe disease presentations. Because commonly used evaluation metrics emphasize central distributional features while largely ignoring tail behavior, these clinically important discrepancies may go undetected in practice when using tools such as Fr\'echet distances. A related implication of our findings is the critical importance of developing generative models that can more accurately capture extreme and tail behavior in medical applications.  

While we focused on evaluating image generative models in this work, the approach presented here can in principle be applied to other data modalities, as long as an appropriate domain-specific embedding map $f: \X \to \R^p$ is available. Developing similar evaluation metrics for synthetic data from more complex generative tasks, such as conditional generation and multi-modal data generation, remains an open challenge.

\bigskip
\begin{center}
{\large\bf Appendix}
\end{center}
\section{Code Availability}
All code related to this study can be found at \url{https://github.com/edrictam/Embedded-Characteristic-Score}.
\section{Proofs}
\subsection{Proof of Theorem \ref{thm:converge}}
\begin{proof}

Recall that the characteristic function of any real-valued random variable always exists, and is always bounded. By the weak law of large numbers, 
$$\frac{1}{n}\sum_{i = 1}^n \exp(\mi T f_\rho(X_i)) \to_{p} E(\exp(\mi T f_\rho(X)))$$ as $n\to\infty$ and 
$$\frac{1}{m}\sum_{j = 1}^m \exp(\mi T f_\rho(\tilde{X}_j)) \to_{p} E(\exp(\mi T f_\rho(\tilde{X})))$$ as $m\to\infty$, where $\to_p$ denotes convergence in probability.   

This implies \citep{resnick2013probability} that 
\begin{align*}
\frac{1}{n}\sum_{i = 1}^n \exp(\mi T f_\rho(X_i)) - \frac{1}{m}\sum_{j = 1}^m \exp(\mi T f_\rho(\tilde{X}_j)) \\ \to_{p} E(\exp(\mi T f_\rho(X))) - E(\exp(\mi T f_\rho(\tilde{X})))
\end{align*} as $n, m \to \infty$.

By the continuity of the norm $||\cdot||_2$ and the continuous mapping theorem, 
\begin{align*}
||\frac{1}{n}\sum_{i = 1}^n \exp(\mi T f_\rho(X_i)) - \frac{1}{m}\sum_{j = 1}^m \exp(\mi T f_\rho(\tilde{X}_j))||_2  \\ \to_{p} ||E(\exp(\mi T f_\rho(X))) - E(\exp(\mi T f_\rho(\tilde{X})))||_2
\end{align*} as $n, m \to \infty$.
Noting that the above holds for any $\rho$ in $1, \cdots, p$, again this implies \citep{resnick2013probability} that
\begin{align*}\sum_{\rho = 1}^p||\frac{1}{n}\sum_{i = 1}^n \exp(\mi T f_\rho(X_i)) - \frac{1}{m}\sum_{j = 1}^m \exp(\mi T f_\rho(\tilde{X}_j))||_2\\ \to_{p} \sum_{\rho = 1}^p||E(\exp(\mi T f_\rho(X))) - E(\exp(\mi T f_\rho(\tilde{X})))||_2\;.
\end{align*}
Applying the continuous mapping theorem again, we get
\begin{align*}\frac{1}{pT}\sum_{\rho = 1}^p||\frac{1}{n}\sum_{i = 1}^n \exp(\mi T f_\rho(X_i)) - \frac{1}{m}\sum_{j = 1}^m \exp(\mi T f_\rho(\tilde{X}_j))||_2\\ \to_{p} \frac{1}{pT}\sum_{\rho = 1}^p||E(\exp(\mi T f_\rho(X))) - E(\exp(\mi T f_\rho(\tilde{X})))||_2\;,
\end{align*}
which is equivalent to $\hat{r}_{f, T}(P, \tilde{P})$ converging in probability to $r_{f, T}(P, \tilde{P})$ as $n, m \to \infty$. 
\end{proof}
\subsection{Proof of Theorem \ref{thm:pseudo}}
\begin{proof}
This is immediate from the definition that $r_{f, T}(P, \tilde{P}) \geq 0$ and $r_{f, T}(P, \tilde{P}) = r_{f, T}(\tilde{P}, P)$ for any $P, \tilde{P}$. If $P = \tilde{P}$, then \begin{align*}
    r_{f, T}(P, \tilde{P}) &= \frac{1}{pT}\sum_{\rho = 1}^p||E\exp(\mi T f_\rho(X)) - E\exp(\mi T f_\rho(\tilde{X}))||_2 \\
    &= \frac{1}{pT}\sum_{\rho = 1}^p||E\exp(\mi T f_\rho(X)) - E\exp(\mi T f_\rho(X))||_2 \\
    &= 0 \; .
\end{align*} 
Consider any three distributions $P, \tilde{P}, P^*$. 
\begin{align*}
r_{f, T}(P, \tilde{P}) + r_{f, T}(\tilde{P}, P^*)
&= \frac{1}{pT}\sum_{\rho = 1}^p||E\exp(\mi T f_\rho(X)) - E\exp(\mi T f_\rho(\tilde{X}))||_2 \\ &+ \frac{1}{pT}\sum_{\rho = 1}^p ||E\exp(\mi T f_\rho(\tilde{X})) - E\exp(\mi T f_\rho(X^*))||_2 \\
&= \frac{1}{pT}\sum_{\rho = 1}^p[||E\exp(\mi T f_\rho(X)) - E\exp(\mi T f_\rho(\tilde{X}))||_2 \\ &+ ||E\exp(\mi T f_\rho(\tilde{X})) - E\exp(\mi T f_\rho(X^*))||_2 ]\\
&\geq \frac{1}{pT}\sum_{\rho = 1}^p ||E\exp(\mi T f_\rho(X)) - E\exp(\mi T f_\rho(X^*))||_2 \\ &= r_{f, T}(P, P^*) \;.
\end{align*}
Here the last inequality results from the triangle inequality property of $||\cdot||_2$.
Thus, the triangle inequality holds for $r_{f, \T}(\cdot, \cdot)$. 
We have therefore shown that $r_{f, \T}(\cdot, \cdot)$ satisfies all the properties required for a pseudometric. 
\end{proof}

\section{Implementation details of empirical study}

We conducted the entire empirical study in Python and PyTorch \citep{paszke2019pytorch}. We conducted the Mardia Kurtosis test using a function implementation referenced from the open source repository \url{https://github.com/quillan86/mvn-python}. We used a function from the package Pingouin \citep{Vallat2018Pingouin} to perform the Henze-Zirkler test. The p-values returned from these tests are too small to be represented by double precision floating points, hence we report in Table \ref{table:test} the machine epsilon value of $2.2\times 10^{-16}$.

Our implementation of the DC-GAN model \citep{radford2016unsupervised} is directly referenced from the open source repository \url{https://github.com/csinva/gan-vae-pretrained-pytorch}. The pretrained weights for the DC-GAN models under both the MNIST and CIFAR10 datasets are also referenced from the same repository. The DC-GAN model architecture that is adopted in our implementation consists of $5$ 2D convolution layers in both the discriminator and generator networks. 

In the empirical study, we obtained CIFAR10 \citep{krizhevsky2009learning} and MNIST \citep{lecun1998gradient} as built-in datasets from the Torchvision package\citep{marcel2010torchvision}. We use the built-in pretrained Inception v3 model \citep{szegedy2016rethinking} from the Torchvision package to compute the Inception embeddings. 

\section{PCA visualization of Multivariate $t$ versus Gaussian samples}

In our simulations in Section \ref{sec:empirical}, we used the ECS to compare random features from a standard multivariate Gaussian distribution to random features from multivariate $t$ distributions with varying degree-of-freedom parameters $\text{df} = \{100, 10, 5, 3, 2.1\}$. Here, we visualize the differences in tail behavior between these distributions. We draw $10000$ independent random vectors with dimension $p = 32$ from each of these $6$ distributions. We then visually compare the Gaussian vectors with each of the $5$ groups of multivariate $t$ vectors via two-dimensional PCA plots (Figure \ref{fig:pca}). We also perform a baseline Gaussian-to-Gaussian comparison by visualizing the original $10000$ Gaussian samples alongside another $10000$ Gaussian samples. 

The visualizations indicate that, while the multivariate $t$ distribution with $\text{df} = 100$ behaves very similarly to a Gaussian distribution, as we decrease the degree-of-freedom parameter the multivariate $t$ distribution exhibits more outliers and heavier tails, whereas the Gaussian samples exhibited no outliers. 

\begin{figure}[t!]
    \centering
    \begin{subfigure}[t]{0.5\textwidth}
        \centering
        \includegraphics[height=1.8in]{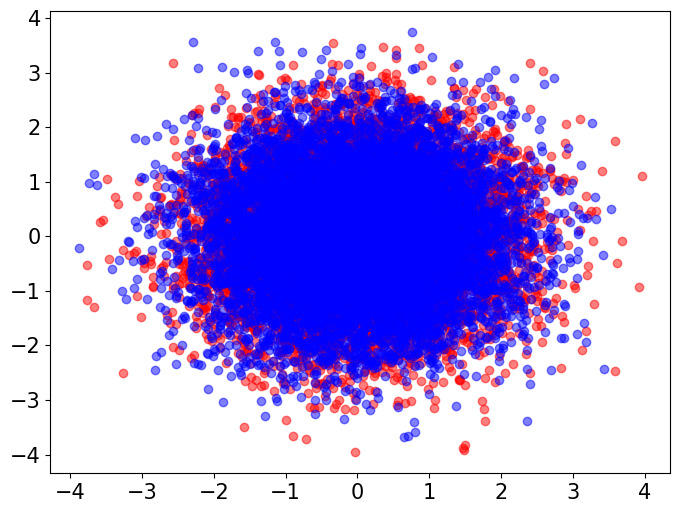}
        \caption{}
    \end{subfigure}%
    ~ 
    \begin{subfigure}[t]{0.5\textwidth}
        \centering
        \includegraphics[height=1.8in]{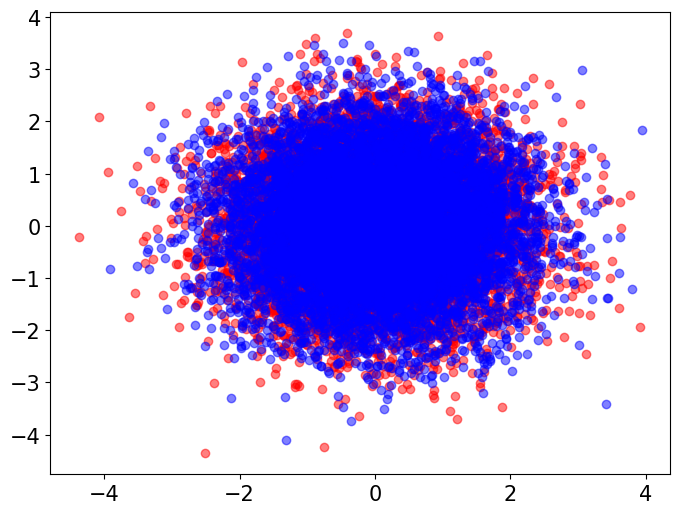}
        \caption{}
    \end{subfigure}

        \begin{subfigure}[t]{0.5\textwidth}
        \centering
        \includegraphics[height=1.8in]{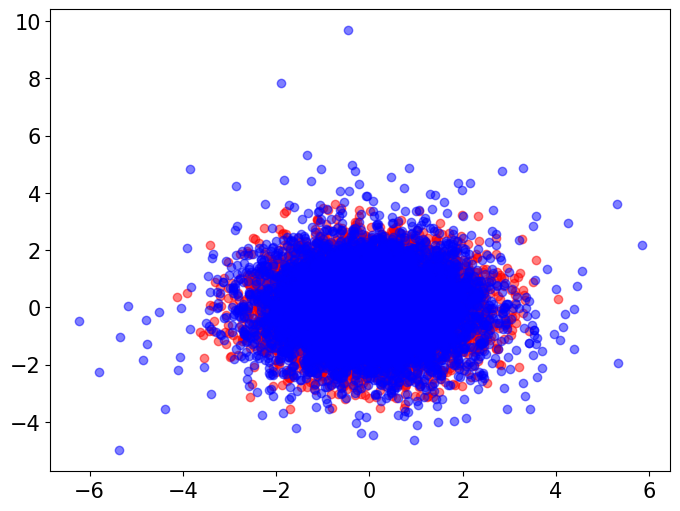}
        \caption{}
    \end{subfigure}%
    ~ 
    \begin{subfigure}[t]{0.5\textwidth}
        \centering
        \includegraphics[height=1.8in]{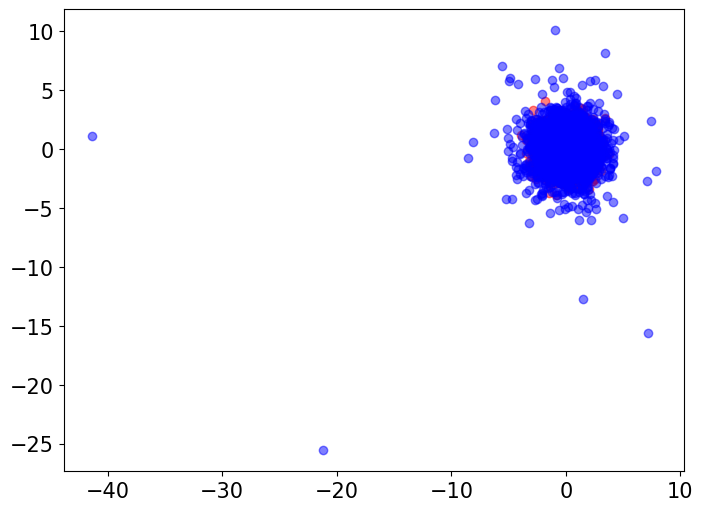}
        \caption{}
    \end{subfigure}

        \begin{subfigure}[t]{0.5\textwidth}
        \centering
        \includegraphics[height=1.8in]{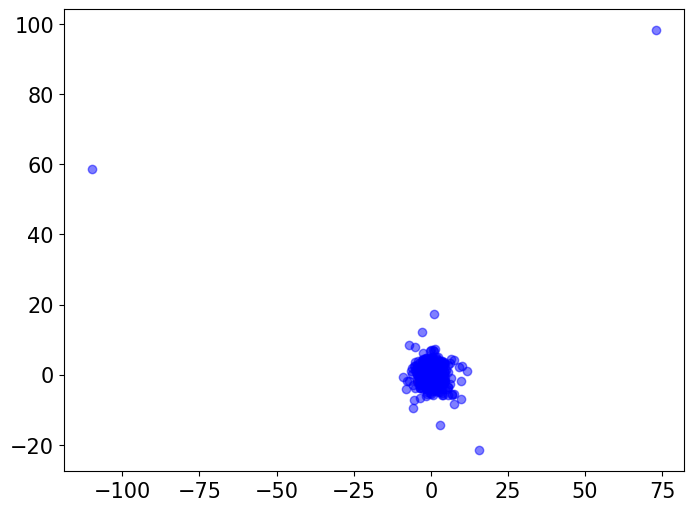}
        \caption{}
    \end{subfigure}%
    ~ 
    \begin{subfigure}[t]{0.5\textwidth}
        \centering
        \includegraphics[height=1.8in]{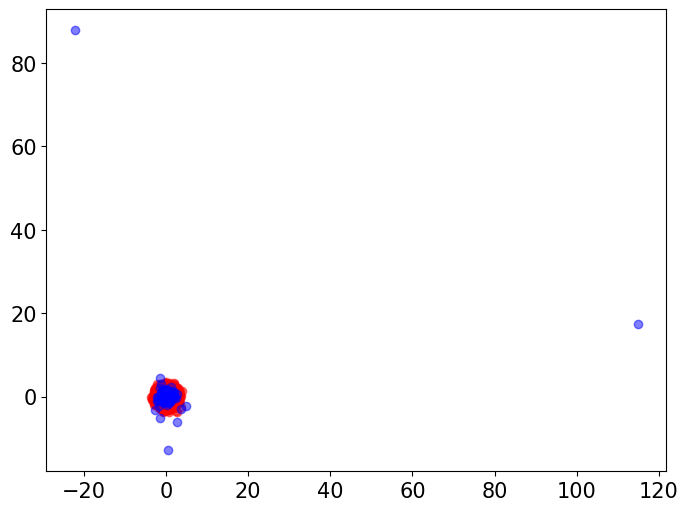}
        \caption{}
    \end{subfigure}

    \caption{\textbf{Comparing Gaussian samples versus multivariate $t$ samples via two dimensional PCA plots.} (a) Gaussian (red) versus Gaussian (blue).   (b) Gaussian (red) versus Multivariate $t$ ($\text{df} = 100$) (blue).
    (c) Gaussian (red) versus Multivariate $t$ ($\text{df} = 10$) (blue).
    (d) Gaussian (red) versus Multivariate $t$ ($\text{df} = 5$) (blue).
    (e) Gaussian (red) versus Multivariate $t$ ($\text{df} = 3$) (blue).
    (f) Gaussian (red) versus Multivariate $t$ ($\text{df} = 2.01$) (blue).}
    \label{fig:pca}
\end{figure}

\subsection{Clinically Relevant Features for Chest X-Ray Data}
Below are the list of features extracted by the pre-trained CXR segmentation pipeline. 
\begin{table}[H]
\centering
\caption{Anatomical structures identified in chest X-ray segmentation.}
\label{tab:cxr_labels}
\begin{tabular}{cl}
\toprule
Index & Label \\
\midrule
0  & Left Clavicle \\
1  & Right Clavicle \\
2  & Left Scapula \\
3  & Right Scapula \\
4  & Left Lung \\
5  & Right Lung \\
6  & Left Hilus Pulmonis \\
7  & Right Hilus Pulmonis \\
8  & Heart \\
9  & Aorta \\
10 & Facies Diaphragmatica \\
11 & Mediastinum \\
12 & Trachea \\
13 & Spine \\
14 & Cardiothoracic Ratio (CTR) \\ 
15 & Left--right Lung Ratio \\
\bottomrule
\end{tabular}
\end{table}
%%%%%%%%%%%%%%%%%%%%%%%%%%%%%%%%%%%%%%%%%%%%%%
%% Appendix---Please move all appendices to %%
%% a Supplementary file.                    %%
%%%%%%%%%%%%%%%%%%%%%%%%%%%%%%%%%%%%%%%%%%%%%%
%% Support information, if any,             %%
%% should be provided in the                %%
%% Acknowledgements section.                %%
%%%%%%%%%%%%%%%%%%%%%%%%%%%%%%%%%%%%%%%%%%%%%%
\begin{acks}[Acknowledgments]
% The authors would like to thank ...XXX
\end{acks}
%%%%%%%%%%%%%%%%%%%%%%%%%%%%%%%%%%%%%%%%%%%%%%
%% Funding information, if any,             %%
%% should be provided in the                %%
%% funding section.                         %%
%%%%%%%%%%%%%%%%%%%%%%%%%%%%%%%%%%%%%%%%%%%%%%
\begin{funding}
ET and BEE were funded in part by grants from the Parker Institute for Cancer Immunology (PICI), the Chan-Zuckerberg Institute (CZI), the Biswas Family Foundation, NIH NHGRI R01 HG012967, and NIH NHGRI R01 HG013736. ET's research has been supported by a Warren Alpert Fellowship and a Croucher Fellowship. 
BEE is a CIFAR Fellow in the Multiscale Human Program.

BEE is on the Scientific Advisory Board for ArrePath Inc, GSK AI for Cancer, and Freenome.
\end{funding}

%%%%%%%%%%%%%%%%%%%%%%%%%%%%%%%%%%%%%%%%%%%%%%
%% Supplementary Material, including data   %%
%% sets and code, should be provided in     %%
%% {supplement} environment with title      %%
%% and short description. It cannot be      %%
%% available exclusively as external link.  %%
%% All Supplementary Material must be       %%
%% available to the reader on Project       %%
%% Euclid with the published article.       %%
%%%%%%%%%%%%%%%%%%%%%%%%%%%%%%%%%%%%%%%%%%%%%%
%\begin{supplement}
%\stitle{???}
%\sdescription{???.}
%\end{supplement}

%%%%%%%%%%%%%%%%%%%%%%%%%%%%%%%%%%%%%%%%%%%%%%%%%%%%%%%%%%%%%
%%                  The Bibliography                       %%
%%                                                         %%
%%  imsart-nameyear.bst  will be used to                   %%
%%  create a .BBL file for submission.                     %%
%%                                                         %%
%%  Note that the displayed Bibliography will not          %%
%%  necessarily be rendered by Latex exactly as specified  %%
%%  in the online Instructions for Authors.                %%
%%                                                         %%
%%  MR numbers will be added by VTeX.                      %%
%%                                                         %%
%%  Use \cite{...} to cite references in text.             %%
%%                                                         %%
%%%%%%%%%%%%%%%%%%%%%%%%%%%%%%%%%%%%%%%%%%%%%%%%%%%%%%%%%%%%%
\newpage
%% if your bibliography is in bibtex format, uncomment commands:
\bibliographystyle{imsart-nameyear} % Style BST file
%\bibliography{bibliography}       % Bibliography file (usually '*.bib')
\bibliography{references}

@article{efron2000bootstrap,
  title={The bootstrap and modern statistics},
  author={Efron, Bradley},
  journal={Journal of the American Statistical Association},
  volume={95},
  number={452},
  pages={1293--1296},
  year={2000},
  publisher={Taylor \& Francis}
}

@article{armato1999computerized,
  title={Computerized analysis of abnormal asymmetry in digital chest radiographs: evaluation of potential utility},
  author={Armato, Samuel G and Giger, Maryellen L and MacMahon, Heber},
  journal={Journal of digital imaging},
  volume={12},
  number={1},
  pages={34--42},
  year={1999},
  publisher={Springer}
}

@article{tulo2023evaluation,
  title={Evaluation of geometric differences between right and left lungs in bacterial pneumonia chest radiographs},
  author={Tulo, Sukanta Kumar and Govindarajan, Satyavratan and Ramu, Palaniappan and Swaminathan, Ramakrishnan},
  journal={Biomedical Signal Processing and Control},
  volume={85},
  pages={105000},
  year={2023},
  publisher={Elsevier}
}

@article{kizilgoz2025update,
  title={An Update on Cardiothoracic Ratio: A Study of X-Ray Measurements},
  author={K{\i}z{\i}lg{\"o}z, Berihat and {\"O}ztepe, Muhammet F{\i}rat},
  journal={Advanced Radiology and Imaging},
  year={2025},
  publisher={Advanced Radiology and Imaging}
}

@article{brakohiapa2017radiographic,
  title={Radiographic determination of cardiomegaly using cardiothoracic ratio and transverse cardiac diameter: Can one size fit all? Part one},
  author={Brakohiapa, Edmund Kwadwo Kwakye and Botwe, Benard Ohene and Sarkodie, Benjamin Dabo and Ofori, Eric Kwesi and Coleman, Jerry},
  journal={The Pan African Medical Journal},
  volume={27},
  pages={201},
  year={2017}
}

@article{ccalli2021deep,
  title={Deep learning for chest X-ray analysis: A survey},
  author={{\c{C}}all{\i}, Erdi and Sogancioglu, Ecem and Van Ginneken, Bram and van Leeuwen, Kicky G and Murphy, Keelin},
  journal={Medical image analysis},
  volume={72},
  pages={102125},
  year={2021},
  publisher={Elsevier}
}

@article{mettler2009radiologic,
  title={Radiologic and nuclear medicine studies in the United States and worldwide: frequency, radiation dose, and comparison with other radiation sources—1950--2007},
  author={Mettler Jr, Fred A and Bhargavan, Mythreyi and Faulkner, Keith and Gilley, Debbie B and Gray, Joel E and Ibbott, Geoffrey S and Lipoti, Jill A and Mahesh, Mahadevappa and McCrohan, John L and Stabin, Michael G and others},
  journal={Radiology},
  volume={253},
  number={2},
  pages={520--531},
  year={2009},
  publisher={Radiological Society of North America, Inc.}
}

@article{osuala2023medigan,
  title={medigan: a Python library of pretrained generative models for medical image synthesis},
  author={Osuala, Richard and Skorupko, Grzegorz and Lazrak, Noussair and Garrucho, Lidia and Garc{\'\i}a, Eloy and Joshi, Smriti and Jouide, Socayna and Rutherford, Michael and Prior, Fred and Kushibar, Kaisar and others},
  journal={Journal of Medical Imaging},
  volume={10},
  number={6},
  pages={061403--061403},
  year={2023},
  publisher={Society of Photo-Optical Instrumentation Engineers}
}

@inproceedings{cohen2022torchxrayvision,
  title={TorchXRayVision: A library of chest X-ray datasets and models},
  author={Cohen, Joseph Paul and Viviano, Joseph D and Bertin, Paul and Morrison, Paul and Torabian, Parsa and Guarrera, Matteo and Lungren, Matthew P and Chaudhari, Akshay and Brooks, Rupert and Hashir, Mohammad and others},
  booktitle={International Conference on Medical Imaging with Deep Learning},
  pages={231--249},
  year={2022},
  organization={PMLR}
}

@article{man2022review,
  title={A review of synthetic image data and its use in computer vision},
  author={Man, Keith and Chahl, Javaan},
  journal={Journal of Imaging},
  volume={8},
  number={11},
  pages={310},
  year={2022},
  publisher={MDPI}
}

@inproceedings{fan2024scaling,
  title={Scaling laws of synthetic images for model training... for now},
  author={Fan, Lijie and Chen, Kaifeng and Krishnan, Dilip and Katabi, Dina and Isola, Phillip and Tian, Yonglong},
  booktitle={Proceedings of the IEEE/CVF Conference on Computer Vision and Pattern Recognition},
  pages={7382--7392},
  year={2024}
}

@article{coyner2022synthetic,
  title={Synthetic medical images for robust, privacy-preserving training of artificial intelligence: application to retinopathy of prematurity diagnosis},
  author={Coyner, Aaron S and Chen, Jimmy S and Chang, Ken and Singh, Praveer and Ostmo, Susan and Chan, RV Paul and Chiang, Michael F and Kalpathy-Cramer, Jayashree and Campbell, J Peter and Imaging and Informatics in Retinopathy of Prematurity Consortium and others},
  journal={Ophthalmology Science},
  volume={2},
  number={2},
  pages={100126},
  year={2022},
  publisher={Elsevier}
}

@article{abdusalomov2023evaluating,
  title={Evaluating synthetic medical images using artificial intelligence with the GAN algorithm},
  author={Abdusalomov, Akmalbek Bobomirzaevich and Nasimov, Rashid and Nasimova, Nigorakhon and Muminov, Bahodir and Whangbo, Taeg Keun},
  journal={Sensors},
  volume={23},
  number={7},
  pages={3440},
  year={2023},
  publisher={MDPI}
}

@article{broder2011imaging,
  title={Imaging the chest: the chest radiograph},
  author={Broder, Joshua},
  journal={Diagnostic imaging for the emergency physician},
  pages={185},
  year={2011}
}

@article{yamamoto2023visual,
  title={Visual Turing test is not sufficient to evaluate the performance of medical generative models},
  author={Yamamoto, Shoichiro and Higaki, Akinori},
  journal={European Radiology Experimental},
  volume={7},
  number={1},
  pages={31},
  year={2023},
  publisher={Springer}
}

@article{zamzmi2025scorecard,
  title={Scorecard for synthetic medical data evaluation},
  author={Zamzmi, Ghada and Subbaswamy, Adarsh and Sizikova, Elena and Margerrison, Edward and Delfino, Jana G and Badano, Aldo},
  journal={Communications Engineering},
  volume={4},
  number={1},
  pages={130},
  year={2025},
  publisher={Nature Publishing Group UK London}
}

@article{jang2023image,
  title={Image Turing test and its applications on synthetic chest radiographs by using the progressive growing generative adversarial network},
  author={Jang, Miso and Bae, Hyun-jin and Kim, Minjee and Park, Seo Young and Son, A-YEON and Choi, Se Jin and Choe, Jooae and Choi, Hye Young and Hwang, Hye Jeon and Noh, Han Na and others},
  journal={Scientific reports},
  volume={13},
  number={1},
  pages={2356},
  year={2023},
  publisher={Nature Publishing Group UK London}
}

@article{theis2015note,
  title={A note on the evaluation of generative models},
  author={Theis, Lucas and Oord, A{\"a}ron van den and Bethge, Matthias},
  journal={arXiv preprint arXiv:1511.01844},
  year={2015}
}

@inproceedings{woodland2024feature,
  title={Feature extraction for generative medical imaging evaluation: New evidence against an evolving trend},
  author={Woodland, McKell and Castelo, Austin and Al Taie, Mais and Albuquerque Marques Silva, Jessica and Eltaher, Mohamed and Mohn, Frank and Shieh, Alexander and Kundu, Suprateek and Yung, Joshua P and Patel, Ankit B and others},
  booktitle={International Conference on Medical Image Computing and Computer-Assisted Intervention},
  pages={87--97},
  year={2024},
  organization={Springer}
}

@article{konz2026frechet,
  title={Fr{\'e}chet radiomic distance (FRD): A versatile metric for comparing medical imaging datasets},
  author={Konz, Nicholas and Osuala, Richard and Verma, Preeti and Chen, Yuwen and Gu, Hanxue and Dong, Haoyu and Chen, Yaqian and Marshall, Andrew and Garrucho, Lidia and Kushibar, Kaisar and others},
  journal={Medical Image Analysis},
  pages={103943},
  year={2026},
  publisher={Elsevier}
}

@article{norori2021addressing,
  title={Addressing bias in big data and AI for health care: A call for open science},
  author={Norori, Natalia and Hu, Qiyang and Aellen, Florence Marcelle and Faraci, Francesca Dalia and Tzovara, Athina},
  journal={Patterns},
  volume={2},
  number={10},
  year={2021},
  publisher={Elsevier}
}

@article{pencina2020prediction,
  title={Prediction models—development, evaluation, and clinical application},
  author={Pencina, Michael J and Goldstein, Benjamin A and D’Agostino, Ralph B},
  journal={New England Journal of Medicine},
  volume={382},
  number={17},
  pages={1583--1586},
  year={2020},
  publisher={Mass Medical Soc}
}

@article{pan20232d,
  title={2D medical image synthesis using transformer-based denoising diffusion probabilistic model},
  author={Pan, Shaoyan and Wang, Tonghe and Qiu, Richard LJ and Axente, Marian and Chang, Chih-Wei and Peng, Junbo and Patel, Ashish B and Shelton, Joseph and Patel, Sagar A and Roper, Justin and others},
  journal={Physics in Medicine \& Biology},
  volume={68},
  number={10},
  pages={105004},
  year={2023},
  publisher={IOP Publishing}
}

@article{truhn2025synthetic,
  title={Synthetic chest X-ray images from text prompts},
  author={Truhn, Daniel and Kather, Jakob Nikolas},
  journal={Nature Biomedical Engineering},
  volume={9},
  number={4},
  pages={439--440},
  year={2025},
  publisher={Nature Publishing Group UK London}
}

@article{raoof2012interpretation,
  title={Interpretation of plain chest roentgenogram},
  author={Raoof, Suhail and Feigin, David and Sung, Arthur and Raoof, Sabiha and Irugulpati, Lavanya and Rosenow III, Edward C},
  journal={Chest},
  volume={141},
  number={2},
  pages={545--558},
  year={2012},
  publisher={Elsevier}
}

@article{mumuni2024survey,
  title={A survey of synthetic data augmentation methods in computer vision},
  author={Mumuni, Alhassan and Mumuni, Fuseini and Gerrar, Nana Kobina},
  journal={arXiv preprint arXiv:2403.10075},
  year={2024}
}

@article{arora2025urgent,
  title={The urgent need to accelerate synthetic data privacy frameworks for medical research},
  author={Arora, Anmol and Wagner, Siegfried Karl and Carpenter, Robin and Jena, Rajesh and Keane, Pearse A},
  journal={The Lancet Digital Health},
  volume={7},
  number={2},
  pages={e157--e160},
  year={2025},
  publisher={Elsevier}
}

@inproceedings{otani2023toward,
  title={Toward verifiable and reproducible human evaluation for text-to-image generation},
  author={Otani, Mayu and Togashi, Riku and Sawai, Yu and Ishigami, Ryosuke and Nakashima, Yuta and Rahtu, Esa and Heikkil{\"a}, Janne and Satoh, Shin’ichi},
  booktitle={Proceedings of the IEEE/CVF Conference on Computer Vision and Pattern Recognition},
  pages={14277--14286},
  year={2023}
}

@article{giuffre2023harnessing,
  title={Harnessing the power of synthetic data in healthcare: innovation, application, and privacy},
  author={Giuffr{\`e}, Mauro and Shung, Dennis L},
  journal={NPJ digital medicine},
  volume={6},
  number={1},
  pages={186},
  year={2023},
  publisher={Nature Publishing Group UK London}
}

@article{koetzier2024generating,
  title={Generating synthetic data for medical imaging},
  author={Koetzier, Lennart R and Wu, Jie and Mastrodicasa, Domenico and Lutz, Aline and Chung, Matthew and Koszek, W Adam and Pratap, Jayanth and Chaudhari, Akshay S and Rajpurkar, Pranav and Lungren, Matthew P and others},
  journal={Radiology},
  volume={312},
  number={3},
  pages={e232471},
  year={2024},
  publisher={Radiological Society of North America}
}

@inproceedings{wang2017chestx,
  title={Chestx-ray8: Hospital-scale chest x-ray database and benchmarks on weakly-supervised classification and localization of common thorax diseases},
  author={Wang, Xiaosong and Peng, Yifan and Lu, Le and Lu, Zhiyong and Bagheri, Mohammadhadi and Summers, Ronald M},
  booktitle={Proceedings of the IEEE conference on computer vision and pattern recognition},
  pages={2097--2106},
  year={2017}
}

@article{denton2015deep,
  title={Deep generative image models using a {L}aplacian pyramid of adversarial networks},
  author={Denton, Emily L and Chintala, Soumith and Fergus, Rob and others},
  journal={Advances in Neural Information Processing Systems},
  volume={28},
  year={2015}
}

@article{tam2024On,
  title={On the Statistical Capacity of Deep Generative Models},
  author={Tam, Edric and Dunson, David},
  journal={Technical Report},
  year={2025}
}

@article{liu2023spiking,
  title={Spiking-diffusion: Vector quantized discrete diffusion model with spiking neural networks},
  author={Liu, Mingxuan and Gan, Jie and Wen, Rui and Li, Tao and Chen, Yongli and Chen, Hong},
  journal={arXiv preprint arXiv:2308.10187},
  year={2023}
}

@article{Vallat2018Pingouin,
    title = {Pingouin: statistics in {P}ython},
    volume = {3},
    DOI = {10.21105/joss.01026},
    number = {31},
    journal = {Journal of Open Source Software},
    publisher = {The Open Journal},
    author = {Vallat,  Raphael},
    year = {2018},
    pages = {1026}
}

@book{lukacs1970characteristic,
  title={Characteristic Functions},
  author={Lukacs, Eugene},
  isbn={9780852641705},
  lccn={73171128},
  series={Griffin Books of Cognate Interest},
  year={1970},
  publisher={Hafner Publishing Company}
}

@inproceedings{deng2009imagenet,
  title={{ImageNet}: A large-scale hierarchical image database},
  author={Deng, Jia and Dong, Wei and Socher, Richard and Li, Li-Jia and Li, Kai and Fei-Fei, Li},
  booktitle={2009 IEEE Conference on Computer Vision and Pattern Recognition},
  pages={248--255},
  year={2009},
  organization={IEEE}
}

@inproceedings{marcel2010torchvision,
  title={Torchvision the machine-vision package of {Torch}},
  author={Marcel, S{\'e}bastien and Rodriguez, Yann},
  booktitle={Proceedings of the 18th ACM International Conference on Multimedia},
  pages={1485--1488},
  year={2010}
}

@article{lecun1998gradient,
  title={Gradient-based learning applied to document recognition},
  author={LeCun, Yann and Bottou, L{\'e}on and Bengio, Yoshua and Haffner, Patrick},
  journal={Proceedings of the IEEE},
  volume={86},
  number={11},
  pages={2278--2324},
  year={1998},
  publisher={Ieee}
}

@inproceedings{szegedy2016rethinking,
  title={Rethinking the {I}nception architecture for computer vision},
  author={Szegedy, Christian and Vanhoucke, Vincent and Ioffe, Sergey and Shlens, Jon and Wojna, Zbigniew},
  booktitle={Proceedings of the IEEE Conference on Computer Vision and Pattern Recognition},
  pages={2818--2826},
  year={2016}
}

@article{paszke2019pytorch,
  title={{PyTorch}: An imperative style, high-performance deep learning library},
  author={Paszke, Adam and Gross, Sam and Massa, Francisco and Lerer, Adam and Bradbury, James and Chanan, Gregory and Killeen, Trevor and Lin, Zeming and Gimelshein, Natalia and Antiga, Luca and others},
  journal={Advances in Neural Information Processing Systems},
  volume={32},
  year={2019}
}

@article{salimans2016improved,
  title={Improved techniques for training {GAN}s},
  author={Salimans, Tim and Goodfellow, Ian and Zaremba, Wojciech and Cheung, Vicki and Radford, Alec and Chen, Xi},
  journal={Advances in Neural Information Processing Systems},
  volume={29},
  year={2016}
}

@article{fraser1990diagnosis,
  title={Diagnosis of Diseases of the Chest (vol 2)},
  author={Fraser, Robert G and Par{\'e}, JA Peter},
  journal={Critical Care Medicine},
  volume={18},
  number={9},
  pages={1052},
  year={1990},
  publisher={LWW}
}

@article{goodfellow2014generative,
  title={Generative adversarial nets},
  author={Goodfellow, Ian and Pouget-Abadie, Jean and Mirza, Mehdi and Xu, Bing and Warde-Farley, David and Ozair, Sherjil and Courville, Aaron and Bengio, Yoshua},
  journal={Advances in Neural Information Processing Systems},
  volume={27},
  year={2014}
}

@article{xu2018empirical,
  title={An empirical study on evaluation metrics of generative adversarial networks},
  author={Xu, Qiantong and Huang, Gao and Yuan, Yang and Guo, Chuan and Sun, Yu and Wu, Felix and Weinberger, Kilian},
  journal={arXiv preprint arXiv:1806.07755},
  year={2018}
}

@inproceedings{binkowski2018demystifying,
  author       = {Mikolaj Binkowski and
                  Danica J. Sutherland and
                  Michael Arbel and
                  Arthur Gretton},
  title        = {Demystifying {MMD} {GAN}s},
  booktitle    = {6th International Conference on Learning Representations, {ICLR} 2018,
                  Vancouver, BC, Canada, April 30 - May 3, 2018, Conference Track Proceedings},
  publisher    = {OpenReview.net},
  year         = {2018},
  url          = {https://openreview.net/forum?id=r1lUOzWCW},
  timestamp    = {Sun, 24 Jan 2021 18:18:43 +0100},
  biburl       = {https://dblp.org/rec/conf/iclr/BinkowskiSAG18.bib},
  bibsource    = {dblp computer science bibliography, https://dblp.org}
}

@article{betzalel2024evaluation,
  title={Evaluation metrics for generative models: An empirical study},
  author={Betzalel, Eyal and Penso, Coby and Fetaya, Ethan},
  journal={Machine Learning and Knowledge Extraction},
  volume={6},
  number={3},
  pages={1531--1544},
  year={2024},
  publisher={MDPI}
}

@article{jalali2024information,
  title={An information-theoretic evaluation of generative models in learning multi-modal distributions},
  author={Jalali, Mohammad and Li, Cheuk Ting and Farnia, Farzan},
  journal={Advances in Neural Information Processing Systems},
  volume={36},
  year={2024}
}

@inproceedings{jayasumana2024rethinking,
  title={Rethinking fid: Towards a better evaluation metric for image generation},
  author={Jayasumana, Sadeep and Ramalingam, Srikumar and Veit, Andreas and Glasner, Daniel and Chakrabarti, Ayan and Kumar, Sanjiv},
  booktitle={Proceedings of the IEEE/CVF Conference on Computer Vision and Pattern Recognition},
  pages={9307--9315},
  year={2024}
}

@book{hogan2005time,
  title={Time-{F}requency and {T}ime-{S}cale {M}ethods: {A}daptive {D}ecompositions, {U}ncertainty {P}rinciples, and {S}ampling},
  author={Hogan, Jeffrey A and Lakey, Joseph D},
  year={2005},
  publisher={Springer}
}

@inproceedings{wei2022duelgan,
  title={{DuelGAN}: a duel between two discriminators stabilizes the {GAN} training},
  author={Wei, Jiaheng and Liu, Minghao and Luo, Jiahao and Zhu, Andrew and Davis, James and Liu, Yang},
  booktitle={European Conference on Computer Vision},
  pages={290--317},
  year={2022},
  organization={Springer}
}

@article{krizhevsky2009learning,
  title={Learning multiple layers of features from tiny images},
  author={Krizhevsky, Alex},
  year={2009},
  publisher={Technical Report}
}

@article{miyato2018spectral,
  title={Spectral normalization for generative adversarial networks},
  author={Miyato, Takeru and Kataoka, Toshiki and Koyama, Masanori and Yoshida, Yuichi},
  journal={arXiv preprint arXiv:1802.05957},
  year={2018}
}

@article{henze1990class,
  title={A class of invariant consistent tests for multivariate normality},
  author={Henze, Norbert and Zirkler, Bernd},
  journal={Communications in statistics-Theory and Methods},
  volume={19},
  number={10},
  pages={3595--3617},
  year={1990},
  publisher={Taylor \& Francis}
}

@article{mardia1970measures,
  title={Measures of multivariate skewness and kurtosis with applications},
  author={Mardia, Kanti V},
  journal={Biometrika},
  volume={57},
  number={3},
  pages={519--530},
  year={1970},
  publisher={Oxford University Press}
}

@inproceedings{radford2016unsupervised,
  author       = {Alec Radford and
                  Luke Metz and
                  Soumith Chintala},
  editor       = {Yoshua Bengio and
                  Yann LeCun},
  title        = {Unsupervised Representation Learning with Deep Convolutional Generative
                  Adversarial Networks},
  booktitle    = {4th International Conference on Learning Representations, {ICLR} 2016,
                  San Juan, Puerto Rico, May 2-4, 2016, Conference Track Proceedings},
  year         = {2016},
  url          = {http://arxiv.org/abs/1511.06434},
  timestamp    = {Thu, 25 Jul 2019 14:25:38 +0200},
  biburl       = {https://dblp.org/rec/journals/corr/RadfordMC15.bib},
  bibsource    = {dblp computer science bibliography, https://dblp.org}
}

@inproceedings{song2021score,
  author       = {Yang Song and
                  Jascha Sohl{-}Dickstein and
                  Diederik P. Kingma and
                  Abhishek Kumar and
                  Stefano Ermon and
                  Ben Poole},
  title        = {Score-Based Generative Modeling through Stochastic Differential Equations},
  booktitle    = {9th International Conference on Learning Representations, {ICLR} 2021,
                  Virtual Event, Austria, May 3-7, 2021},
  publisher    = {OpenReview.net},
  year         = {2021},
  url          = {https://openreview.net/forum?id=PxTIG12RRHS},
  timestamp    = {Wed, 23 Jun 2021 17:36:39 +0200},
  biburl       = {https://dblp.org/rec/conf/iclr/0011SKKEP21.bib},
  bibsource    = {dblp computer science bibliography, https://dblp.org}
}

@inproceedings{ho2020denoising,
  author       = {Jonathan Ho and
                  Ajay Jain and
                  Pieter Abbeel},
  editor       = {Hugo Larochelle and
                  Marc'Aurelio Ranzato and
                  Raia Hadsell and
                  Maria{-}Florina Balcan and
                  Hsuan{-}Tien Lin},
  title        = {Denoising Diffusion Probabilistic Models},
  booktitle    = {Advances in Neural Information Processing Systems 33: Annual Conference
                  on Neural Information Processing Systems 2020, NeurIPS 2020, December
                  6-12, 2020, virtual},
  year         = {2020},
  url          = {https://proceedings.neurips.cc/paper/2020/hash/4c5bcfec8584af0d967f1ab10179ca4b-Abstract.html},
  timestamp    = {Tue, 19 Jan 2021 15:57:09 +0100},
  biburl       = {https://dblp.org/rec/conf/nips/HoJA20.bib},
  bibsource    = {dblp computer science bibliography, https://dblp.org}
}

@inproceedings{kingma2014auto,
  author       = {Diederik P. Kingma and
                  Max Welling},
  editor       = {Yoshua Bengio and
                  Yann LeCun},
  title        = {Auto-Encoding Variational {B}ayes},
  booktitle    = {2nd International Conference on Learning Representations, {ICLR} 2014,
                  Banff, AB, Canada, April 14-16, 2014, Conference Track Proceedings},
  year         = {2014},
  url          = {http://arxiv.org/abs/1312.6114},
  timestamp    = {Thu, 04 Apr 2019 13:20:07 +0200},
  biburl       = {https://dblp.org/rec/journals/corr/KingmaW13.bib},
  bibsource    = {dblp computer science bibliography, https://dblp.org}
}

@book{durrett2019probability,
  title={Probability: {T}heory and {E}xamples},
  author={Durrett, Rick},
  volume={49},
  year={2019},
  publisher={Cambridge University Press}
}

@book{resnick2013probability,
  title={A {P}robability {P}ath},
  author={Resnick, Sidney I},
  year={2013},
  publisher={Springer Science \& Business Media}
}

@article{stein2024exposing,
  title={Exposing flaws of generative model evaluation metrics and their unfair treatment of diffusion models},
  author={Stein, George and Cresswell, Jesse and Hosseinzadeh, Rasa and Sui, Yi and Ross, Brendan and Villecroze, Valentin and Liu, Zhaoyan and Caterini, Anthony L and Taylor, Eric and Loaiza-Ganem, Gabriel},
  journal={Advances in Neural Information Processing Systems},
  volume={36},
  year={2024}
}

@article{barron1993universal,
  title={Universal approximation bounds for superpositions of a sigmoidal function},
  author={Barron, Andrew R},
  journal={IEEE Transactions on Information Theory},
  volume={39},
  number={3},
  pages={930--945},
  year={1993},
  publisher={IEEE}
}

@article{cybenko1989approximation,
  title={Approximation by superpositions of a sigmoidal function},
  author={Cybenko, George},
  journal={Mathematics of Control, Signals and Systems},
  volume={2},
  number={4},
  pages={303--314},
  year={1989},
  publisher={Springer}
}

@article{hornik1991approximation,
  title={Approximation capabilities of multilayer feedforward networks},
  author={Hornik, Kurt},
  journal={Neural Networks},
  volume={4},
  number={2},
  pages={251--257},
  year={1991},
  publisher={Elsevier}
}

@article{kingma2019introduction,
  title={An {I}ntroduction to {V}ariational {A}utoencoders},
  author={Kingma, Diederik P and Welling, Max and others},
  journal={Foundations and Trends{\textregistered} in Machine Learning},
  volume={12},
  number={4},
  pages={307--392},
  year={2019},
  publisher={Now Publishers, Inc.}
}

@article{doersch2016tutorial,
  title={Tutorial on variational autoencoders},
  author={Doersch, Carl},
  journal={arXiv preprint arXiv:1606.05908},
  year={2016}
}

@inproceedings{rossler2019faceforensics++,
  title={Faceforensics++: Learning to detect manipulated facial images},
  author={Rossler, Andreas and Cozzolino, Davide and Verdoliva, Luisa and Riess, Christian and Thies, Justus and Nie{\ss}ner, Matthias},
  booktitle={Proceedings of the IEEE/CVF international conference on computer vision},
  pages={1--11},
  year={2019}
}

@article{tversky1974judgment,
  title={Judgment under Uncertainty: Heuristics and Biases: Biases in judgments reveal some heuristics of thinking under uncertainty.},
  author={Tversky, Amos and Kahneman, Daniel},
  journal={Science},
  volume={185},
  number={4157},
  pages={1124--1131},
  year={1974},
  publisher={American association for the advancement of science}
}

@inproceedings{williams2008people,
  title={Why are people bad at detecting randomness? because it is hard},
  author={Williams, Joseph Jay and Griffiths, Thomas L},
  booktitle={Proceedings of the 30th Annual Conference of the Cognitive Science Society},
  pages={1158--1163},
  year={2008},
  organization={Citeseer}
}

@article{heusel2017gans,
  title={{GAN}s trained by a two time-scale update rule converge to a local {N}ash equilibrium},
  author={Heusel, Martin and Ramsauer, Hubert and Unterthiner, Thomas and Nessler, Bernhard and Hochreiter, Sepp},
  journal={Advances in Neural Information Processing Systems},
  volume={30},
  year={2017}
}

@article{zhou2019hype,
  title={Hype: A benchmark for human eye perceptual evaluation of generative models},
  author={Zhou, Sharon and Gordon, Mitchell and Krishna, Ranjay and Narcomey, Austin and Fei-Fei, Li F and Bernstein, Michael},
  journal={Advances in Neural Information Processing Systems},
  volume={32},
  year={2019}
}
%% or include bibliography directly:
% \begin{thebibliography}{}
% \bibitem[\protect\citeauthoryear{???}{???}]{b1}
% \end{thebibliography}

\end{document}